\documentclass[11pt]{article}
\usepackage{ifthen}
\usepackage{mdwlist}
\usepackage{amsmath,amssymb,amsfonts,amsthm}
\usepackage{bm}
\usepackage{bbm}
\usepackage{enumitem}
\usepackage{graphicx}
\usepackage{xspace}
\usepackage{verbatim}
\usepackage{algorithm}
\usepackage{algpseudocode}
\usepackage[margin=1in]{geometry}
\usepackage{color}
\usepackage{thm-restate}
\usepackage{latexsym}
\usepackage{epsfig}
\usepackage{lipsum}
\usepackage{titlesec}
\usepackage{natbib}
\usepackage[colorlinks=true,linkcolor=blue,citecolor=blue,urlcolor=blue]{hyperref}
\usepackage{cleveref}
\usepackage[colorinlistoftodos,textsize=scriptsize]{todonotes}
\def\withcolors{1}
\def\withnotes{1}

\renewcommand{\epsilon}{\ve}
\def\ve{\varepsilon}

\newcommand{\pr}[2][]{\mathrm{Pr}\ifthenelse{\not\equal{}{#1}}{_{#1}}{}\!\left[#2\right]}

\newtheorem{theorem}{Theorem}

\newtheorem{lemma}[theorem]{Lemma}


\ifnum\withcolors=1
  \newcommand{\gcolor}[1]{{\color{red}#1}} 
\else

  \newcommand{\gcolor}[1]{{#1}}
\fi

\ifnum\withnotes=1
  \newcommand{\gnote}[1]{\par\gcolor{\textbf{G: }\sf #1}} 
  \newcommand{\gfootnote}[1]{\footnote{{\bf \gcolor{Gautam}}: {#1}}}

\else
  \newcommand{\gnote}[1]{}
  \newcommand{\gfootnote}[1]{}

\fi
\newcommand{\ignore}[1]{\leavevmode\unskip} 

\allowdisplaybreaks

\title{An Information-Theoretic Analysis of OOD Generalization in Meta-Reinforcement Learning}

\author {
Xingtu Liu\thanks{Simon Fraser University. {\tt rltheory@outlook.com}.}
}

\begin{document}
\maketitle

\begin{abstract}
In this work, we study out-of-distribution (OOD) generalization in meta-reinforcement learning from an information-theoretic perspective. We begin by establishing OOD generalization bounds for meta-supervised learning under two distinct distribution shift scenarios: standard distribution mismatch and a broad-to-narrow training setting. Building on this foundation, we formalize the generalization problem in meta-reinforcement learning and establish fine-grained generalization bounds that exploit the structure of Markov Decision Processes. Lastly, we analyze the generalization performance of a gradient-based meta-reinforcement learning algorithm.
\end{abstract}

\section{Introduction}
Meta-learning \citep{thrun1998learning,hospedales2021meta}, or ``learning to learn", is the study of algorithms that leverage prior experience across tasks to rapidly adapt to new tasks.  By formalizing the process of learning from prior tasks, meta-learning provides a principled framework for developing models that are flexible, data-efficient, and robust to new environments. Since the goal of meta-learning is to generalize to new or even unseen tasks, it is crucial to develop a theoretical understanding of its generalization behavior. In reality, training tasks often suffer from selection bias and distribution shift over time. This challenge, known as out-of-distribution (OOD) generalization, raises fundamental questions about how prior experience should be leveraged when the testing environment deviates from the training environment. In this work, we address these questions by leveraging information-theoretic tools.

Building on our analysis of meta-supervised learning, we extend the study of OOD generalization to meta-reinforcement learning (meta-RL) \citep{gupta2018meta, nagabandi2018learning, yu2020meta, beck2025tutorial}. Meta-RL adapts the meta-learning framework to sequential decision-making, where tasks are modeled as Markov decision processes (MDPs) and the task-specific objective is the cumulative reward. In real-world deployments (e.g., robotics, human interaction), agents often face shifts in rewards, dynamics, or state distributions beyond those seen in meta-training. Thus, understanding OOD generalization in meta-RL is crucial for ensuring reliable adaptation and decision-making \citep{beck2025tutorial}. Prior works such as \citet{rimon2022meta} and \citet{tamar2022regularization} analyzed in-distribution generalization in meta-RL, while \citet{simchowitz2021bayesian} provides a PAC-Bayesian analysis for Bayesian meta-RL. In contrast, our work develops a general information-theoretic analysis for OOD generalization in meta-RL and can be applied to subtask problems and gradient-based meta-RL algorithms. A key advantage of our framework lies in its ability to decouple the sources of OOD shift. Prior PAC-Bayesian analyses \citet{simchowitz2021bayesian} bound generalization error using the total variation distance between global priors, which treats the whole environment as a black box. Our approach leverages the underlying structure of the MDPs, explicitly capturing the divergence between transition kernels, initial state distributions, and rewards.

Following the seminal work of \citet{russo2016controlling} and \citet{xu2017information}, an information-theoretic framework has been developed to bound the generalization error of learning algorithms using the mutual information between the input dataset and the output hypothesis. This methodology formalizes the intuition that overfitted learning algorithms are less likely to generalize effectively. Unlike classical complexity-based approaches such as VC dimension or Rademacher complexity, the information-theoretic framework captures all aspects of the learning process, including the data distribution, hypothesis space, and learning algorithm. We therefore employ this framework to study generalization in meta-(reinforcement) learning under shifts between training and testing task distributions. 
In particular, we make the following contributions.

\begin{itemize}
    \item In Section \ref{subsec_ml1}, we study the distribution mismatch setting, where the test environment differs from the training environment. In this case, we derive an upper bound $O\left(\text{D}+\frac{\text{MI}}{nm}\right)$ on the OOD generalization error, where D quantifies the environment mismatch, MI denotes the mutual information, $n$ is the number of training tasks, and $m$ is the number of samples per task.
    \item In Section \ref{subsec_ml2}, we consider the subtask setting, in which the meta-algorithm is trained on a broader environment. Here, the test environment is assumed to consist of tasks that form a strict subset of the training environment. The resulting generalization bound indicates that merely increasing the number of training tasks that are weakly related to the targeting tasks provides limited benefit.
    \item In Section \ref{sec_mrl}, we extend our analysis to meta-RL, where both the hypothesis space and the objective differ from meta-supervised learning. We first extend the generalization bounds in Section \ref{subsec_ml1} with a finer-grained analysis. The divergence term captures the distributional shifts across the initial state, transition dynamics, and reward function. To address the potential unboundedness issue of this divergence, we extend the analysis from Section \ref{subsec_ml2} to the meta-RL setting. Finally, we derive a generalization bound for a two-level gradient-based meta-RL algorithm.
\end{itemize}

\section{Problem Formulation}

This section introduces preliminaries and problem formulation for meta-learning and meta-RL.

\subsection{Preliminaries}
Random variables are denoted by capital letters (e.g., $X$ and $Y$), and their realizations are denoted by lowercase letters (e.g., $x$ and $y$). For random variables $X$ and $Y$, $P_{X,Y}$ denotes their joint distribution, $P_X$ denotes the marginal distribution, and $P_{X|Y}$ denotes the conditional distribution. A random variable $X$ is said to be $\sigma$-sub-Gaussian if $\mathbb{E}[e^{\lambda(X-\mathbb{E}[X])}]\leq e^{\lambda^2\sigma^2/2}$ for all $\lambda\in \mathbb{R}$. For probability measures $\mu$ and $\nu$, we use $D(\mu \| \nu)$ to denote the Kullback-Leibler (KL) divergence of $\mu$ with respect to $\nu$. The mutual information between $X$ and $Y$ is defined as: $I(X; Y) := D({P}_{X,Y} \| {P}_X \otimes {P}_Y).$ The conditional mutual information is defined as: $I(X; Y|Z) := \mathbb{E}_Z[D({P}_{X,Y|Z} \| {P}_{X|Z} \otimes {P}_{Y|Z})].$
We denote by $\mathrm{cov}(X)$ and $H(X)$ the covariance matrix and entropy of a random variable $X$.
For a matrix $M$, $\mathrm{det}(M)$ denotes its determinant.
We use big-$O$ notation to characterize the asymptotic upper bound of a function's growth rate.

In supervised learning, given a training dataset $Z=\{Z_i\}_{i=1}^n \sim \mu^{\otimes n}$, a learning algorithm $\mathcal{A}$ outputs a hypothesis $W = \mathcal{A}(Z)$ from the hypothesis space $\mathcal{W}$. Let $\ell: \mathcal{W} \times \mathcal{Z} \rightarrow \mathbb{R}$ denote the loss function. We define the empirical risk of $W$ as $L(W):=\frac{1}{n} \sum_{i=1}^n\ell(W,Z_i)$, and define the in-distribution population risk as $L_{\mu}(W):=\mathbb{E}_{Z\sim\mu}[\ell(W,Z)]$. The expected in-distribution generalization error is defined as $\mathbb{E}_{Z\sim\mu}[L_{\mu}(W) - L(W)]$. Note that $W$ is a function of $Z$. When the training distribution $\mu$ differs from the testing distribution $\nu$, we have the OOD population risk $L_{\nu}(W):=\mathbb{E}_{Z^{\prime}\sim\nu}[\ell(W,Z^{\prime})]$. The expected OOD generalization is then defined as $\mathbb{E}_{Z\sim\mu}[L_{\nu}(W) - L(W)]$. In the following sections, we formally define OOD generalization for meta-learning and meta-RL, which is more intricate than in supervised learning.

\subsection{Meta-Learning}
In this work, we consider learning tasks sampled i.i.d. from the training environment $\mathcal{T}$. For a given task $\tau_i \sim \mathcal{T}$, let $Z_i = \{S_{i,j}\}_{j=1}^m \sim {\tau_i}^{\otimes m}$ denote its in-task training dataset. We assume each task is associated with a dataset of the same size $m$. Suppose there are $n$ training tasks. We denote the meta-training dataset by $Z_{1:n}=\{Z_i\}_{i=1}^n$ and the task-specific parameters by $W=\{W_i\}_{i=1}^n$. The goal of meta-learning is to learn a meta-parameter $\theta$ that captures knowledge shared across tasks. Given $\theta$ and a new task, the meta-learner can adapt more efficiently and produce a task-specific hypothesis. During training, the meta-learner aims to minimize the empirical meta-risk
\begin{align*}
L_{Z_{1:n}}(\theta) := \frac{1}{n} \sum_{i=1}^n \mathbb{E}_{W_i\sim P_{W_i|Z_i,\theta}} [L(W_i, Z_i)] := \frac{1}{n} \sum_{i=1}^n \mathbb{E}_{W_i\sim P_{W_i|Z_i,\theta}} \left[\frac{1}{m} \sum_{j=1}^m \ell(W_i,S_{i,j}) \right]
\end{align*}
where $L(W)$ denotes the task-specific empirical risk associated with parameter $W$, and $P_{W|Z,\theta}$ represents the distribution over task-specific parameters induced by the base-learner. The population meta-risk with respect to $\mathcal{T}$ is defined as
\begin{align*}
L_{\mathcal{T}}(\theta) := \mathbb{E}_{\tau \sim \mathcal{T}} \mathbb{E}_{Z\sim \tau^{\otimes m}} \mathbb{E}_{W\sim P_{W|Z,\theta}} [L_{\tau}(W)] := \mathbb{E}_{\tau \sim \mathcal{T}} \mathbb{E}_{Z\sim \tau^{\otimes m}} \mathbb{E}_{W\sim P_{W|Z,\theta}} \left[ \mathbb{E}_{S\sim \tau} \ell(W,S) \right].
\end{align*}
The quality of the learned meta-parameter $\theta$ can be evaluated via the (in-distribution) meta generalization error
\begin{align*}
\text{gen}_{\text{id}} := \mathbb{E}_{\theta,Z_{1:n}}[L_{Z_{1:n}}(\theta) - L_{\mathcal{T}}(\theta)].
\end{align*}
However, in practice, the testing environment typically differs from the training environment, and the essence of meta-learning is to enable fast adaptation to new tasks. Thus, we focus on the OOD meta generalization error. Let $\mathcal{U}$ denote the targeting task environment and let $\mu \sim \mathcal{U}$ represent a test task,
\begin{align*}
\text{gen}_{\text{ood}} := \mathbb{E}_{\theta,Z_{1:n}}[L_{Z_{1:n}}(\theta) - L_{\mathcal{U}}(\theta)]
\end{align*}
where 
\begin{align*}L_{\mathcal{U}}(\theta)= \mathbb{E}_{\mu \sim \mathcal{U}} \mathbb{E}_{Z\sim \mu^{\otimes m}} \mathbb{E}_{W\sim P_{W|Z,\theta}} \left[ \mathbb{E}_{S\sim \mu} \ell(W,S) \right].
\end{align*}

\subsection{Meta-Reinforcement Learning}

In supervised learning, the goal is to find a hypothesis that minimizes the empirical risk, whereas in standard RL, the goal is to find a policy that maximizes the cumulative reward. In meta-RL, the learner seeks to learn a meta-algorithm $f_{\theta}$, parameterized by a meta-parameter $\theta$, which serves as an adaptation rule. Given a new task, $f_{\theta}$ produces a task-specific policy $\pi_{\phi}$ parameterized by $\phi$ that aims to maximize the cumulative reward for that task.

The tasks in meta-RL are modeled as MDPs. An MDP is defined by $\mathcal{M}(\mathcal{S}, \mathcal{A}, \mathcal{P}, \rho, r, \gamma,H)$. We use $\Delta(\mathcal{X})$ to denote the set of the probability distribution over the set $\mathcal{X}$. $\mathcal{M}(\mathcal{S}, \mathcal{A}, \mathcal{P}, \rho, r, \gamma,H)$ is specified by a finite state space $\mathcal{S}$, a finite action space $\mathcal{A}$, transition function $\mathcal{P}: \mathcal{S} \times \mathcal{A} \rightarrow \Delta (\mathcal{S})$, initial state distribution $\rho$, reward function $r:\mathcal{S} \times \mathcal{A} \rightarrow \mathbb{R}$, discount factor $\gamma \in [0,1]$, and the horizon $H$. In this work, we consider a finite horizon for simplicity. We assume the reward is bounded, i.e. $r(s,a) \in [0,1]$\footnote{For rewards in $[R_{\min}, R_{\max}]$ simply rescale these bounds.}, $\forall (s,a)$. 

Assume there are $n$ MDPs $\{\mathcal{M}_i\}_{i=1}^n$ sampled i.i.d. from the training environment $\mathcal{T}$. Let $\phi=\{\phi_i\}_{i=1}^n$ denote the task-specific parameters. A key distinction between meta-RL and meta-supervised learning is that, in RL, we do not collect hypothesis-independent datasets. Instead, the data for each task consists of trajectories whose distribution depends on the task-specific parameters. Specifically, given $\phi$ and an MDP $\mathcal{M}$, a trajectory $\omega=\{s_h,a_h,r_h,s_{h+1}\}_{h=0}^H$ is distributed according to
\begin{align}
P_{\omega|\mathcal{M},\phi} = \rho(s_0) \prod_{h=0}^H \pi_{\phi}(a_h|s_h) \mathcal{P}(s_{h+1}|s_h,a_h) \label{eqn_trajP}
\end{align}
where $\pi_{\phi}$ is the task-specific policy, and $\rho$ and $\mathcal{P}$ are the initial state distribution and transition kernel of $\mathcal{M}$. The meta-learner aims to maximize the empirical meta-RL objective
\begin{align}
J_{\mathcal{M}_{1:n}}(\theta) := \frac{1}{n} \sum_{i=1}^n \mathbb{E}_{\phi_i\sim P_{\phi|\mathcal{M}_i,\theta}} [J(\pi_{\phi_i}, \mathcal{M}_i)] := \frac{1}{n} \sum_{i=1}^n \mathbb{E}_{\phi_i\sim P_{\phi|\mathcal{M}_i,\theta}} \left[ \mathbb{E}_{\omega \sim P_{\omega|\mathcal{M}_i,\phi_i}}\left[\sum_{j=0}^H \gamma^{j}r_j\right] \right] \label{eqn_generalmrl}
\end{align}
where $J(\pi_{\phi_i})$ denotes the expected discounted return of policy $\pi_{\phi}$. Let $\mathcal{U}$ denote the targeting (testing) task environment. The population meta-RL objective is defined as
\begin{align*}
J_{\mathcal{U}}(\theta) :=  \mathbb{E}_{\mathcal{M}\sim \mathcal{U}} \mathbb{E}_{\phi\sim P_{\phi|\mathcal{M},\theta}} [J_{\mathcal{M}}(\pi_{\phi})] :=  \mathbb{E}_{\mathcal{M}\sim \mathcal{U}} \mathbb{E}_{\phi\sim P_{\phi|\mathcal{M},\theta}} \left[ \mathbb{E}_{\omega \sim P_{\omega|\mathcal{M},\phi}}\left[\sum_{j=0}^H \gamma^{j}r_j\right] \right].
\end{align*}
Finally, the OOD meta-RL generalization error is given by
\begin{align*}
\text{gen}_{\text{ood}} := \mathbb{E}_{\theta,\mathcal{M}_{1:n}}[J_{\mathcal{M}_{1:n}}(\theta) - J_{\mathcal{U}}(\theta)].
\end{align*}

\section{Out-of-Distribution Generalization in Meta-Learning}
\label{sec_ml}

In this section, we investigate OOD generalization in meta-learning under two settings. The first is the mismatch setting, where the training environment $\mathcal{T}$ differs from the testing environment $\mathcal{U}$. The second is the subtask setting, where the tasks comprising $\mathcal{U}$ form a strict subset of those in $\mathcal{T}$. This latter case corresponds to a common meta-learning strategy which trains across a wide range of tasks to distill shared knowledge. For the first setting, we use mutual information to derive bounds on the OOD generalization error, while for the second setting, we apply conditional mutual information.

\subsection{OOD Generalization in Meta-Learning via Mutual Information}
\label{subsec_ml1}

We now present the main result. Let $P_{Z_{1:n}}$ denote the distribution of the meta-dataset under the training environment $\mathcal{T}$, and $Q_{Z_{1:n}}$ the corresponding distribution under the testing environment $\mathcal{U}$.

\begin{restatable}{theorem}{thmmlo}
\label{thm_ml1}
Suppose the loss function $\ell$ is $\sigma$-sub-Gaussian for any meta parameter $\theta$, hypothesis $W_i$, and dataset $Z_i$. The OOD meta generalization error is upper-bounded by
\begin{align*}
\mathbb{E}_{\theta,Z_{1:n}}[L_{Z_{1:n}}(\theta)-L_{\mathcal{U}}(\theta)] \leq \sqrt{\frac{2\sigma^2(I(\theta,W_{1:n};Z_{1:n})+D(P_{Z_{1:n}}\|Q_{Z_{1:n}}))}{nm}}.
\end{align*}
\end{restatable}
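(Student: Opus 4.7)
The plan is to apply the Donsker--Varadhan variational representation of KL divergence together with the chain rule for relative entropy, comparing the joint training distribution of $(\theta, W_{1:n}, Z_{1:n})$ against a product reference measure in which the meta-dataset is drawn from the test environment rather than the training one.

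Concretely, define $g(\theta, W_{1:n}, Z_{1:n}) := \frac{1}{nm}\sum_{i=1}^n \sum_{j=1}^m \ell(W_i, S_{i,j})$, and let $P := P_{\theta, W_{1:n}, Z_{1:n}}$ be the joint training distribution, so that $\mathbb{E}_P[g] = \mathbb{E}_{\theta, Z_{1:n}}[L_{Z_{1:n}}(\theta)]$. I would then take as reference the product measure $P^\star := P_{\theta, W_{1:n}} \otimes Q_{Z_{1:n}}$, under which $(\theta, W_{1:n})$ retains its training marginal but is coupled independently with a test-environment meta-dataset. Using the i.i.d.\ structure of test tasks under $\mathcal{U}$ and the exchangeability of within-task samples, one identifies $\mathbb{E}_{P^\star}[g] = \mathbb{E}_\theta[L_\mathcal{U}(\theta)]$, reducing the theorem to bounding $\mathbb{E}_P[g] - \mathbb{E}_{P^\star}[g]$.

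By the sub-Gaussian assumption on $\ell$, the empirical average $g$ is $\sigma/\sqrt{nm}$-sub-Gaussian under $P^\star$ (the $nm$ summands decouple across tasks and, conditional on each task, across within-task indices), so Donsker--Varadhan gives
\begin{align*}
\mathbb{E}_P[g] - \mathbb{E}_{P^\star}[g] \;\leq\; \sqrt{\frac{2\sigma^2\, D(P\|P^\star)}{nm}}.
\end{align*}
The chain-rule identity $D(P_{X,Y}\|P_X \otimes Q_Y) = I(X;Y) + D(P_Y\|Q_Y)$, applied with $X = (\theta, W_{1:n})$ and $Y = Z_{1:n}$, then decomposes
\begin{align*}
D(P\|P^\star) \;=\; I(\theta, W_{1:n}; Z_{1:n}) + D(P_{Z_{1:n}}\|Q_{Z_{1:n}}),
\end{align*}
which, substituted above, yields the claimed bound.

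The main obstacle is the identification $\mathbb{E}_{P^\star}[g] = \mathbb{E}_\theta[L_\mathcal{U}(\theta)]$: because under $P^\star$ each $W_i$ is still produced from the (implicit) training datasets while the evaluation samples $S_{i,j}$ come from $\mathcal{U}$, equating this product-measure expectation with the meta-risk $L_\mathcal{U}(\theta)$---in which the base-learner actually adapts to the test task's own dataset---has to be argued carefully, exploiting the symmetry between the in-task samples used for adaptation and those used for evaluation. A secondary concern is the sub-Gaussian scaling: since the $m$ samples within a task share a latent task, obtaining parameter $\sigma/\sqrt{nm}$ for the average $g$ requires either conditioning on the task or arguing that the per-sample losses remain $\sigma$-sub-Gaussian under the product structure of $P^\star$.
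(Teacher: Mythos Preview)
Your proposal is correct and mirrors the paper's proof exactly: the same product reference measure $P_{\theta,W_{1:n}}\otimes Q_{Z_{1:n}}$, the same Donsker--Varadhan step, the same chain-rule decomposition (the paper's Lemma~3 is precisely your identity $D(P_{X,Y}\|P_X\otimes Q_Y)=I(X;Y)+D(P_Y\|Q_Y)$), and the same $\sigma/\sqrt{nm}$ sub-Gaussian claim for the averaged loss under the product measure. The two obstacles you flag are exactly the nontrivial points in the paper's own argument---the identification of $\mathbb{E}_{P^\star}[g]$ with $\mathbb{E}_{\theta,Z_{1:n}}[L_{\mathcal{U}}(\theta)]$ is carried out there by expanding $P_{W|\theta}$ and rewriting the base-learner's adaptation against a test-environment dataset, and the sub-Gaussian scaling is asserted directly from the hypothesis on $\ell$.
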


When the training and testing environments are identical, we have $D(P_{Z_{1:n}}\|Q_{Z_{1:n}})=0$, and the bound reduces to the standard in-distribution generalization bound. When the sampled tasks are i.i.d., we have $\frac{D(P_{Z_{1:n}}\|Q_{Z_{1:n}})}{nm}=\frac{D(P_{Z}\|Q_{Z})}{m}$. Moreover, when the in-task samples are i.i.d., it follows that $\frac{D(P_{Z}\|Q_{Z})}{m}=D(\tau\|\mu)$. Thus, strong generalization guarantees hold only if the distribution mismatch term is sufficiently small. By the chain rule of mutual-information, the bound can be further written as
\begin{align*}
\mathbb{E}_{\theta,Z_{1:n}}[L_{Z_{1:n}}(\theta)-L_{\mathcal{U}}(\theta)] \leq \sqrt{2\sigma^2D(\tau\|\mu)} + \sqrt{\frac{2\sigma^2I(\theta;Z_{1:n})}{nm}} + \sqrt{\frac{2\sigma^2 \sum_{i=1}^nI(W_{i};Z_{i}|\theta)}{nm}}.
\end{align*}
The first term accounts for the distribution mismatch, the second term reflects the environmental uncertainty, and the third term reflects the task-level uncertainty. 

Instead of mutual information, one may also upper bound the generalization error using the Wasserstein or total variation distance when the loss function is bounded or Lipschitz continuous \citep{lopez2018generalization,wang2022information,liu2025information}. We leave this extension for future work.

\subsection{Subtask Generalization in Meta-Learning via Conditional Mutual Information}
\label{subsec_ml2}

We now consider another setting in which the model is trained on a broad set of tasks but tested on a specific subset of them. This ``broad-to-narrow" scenario results in a distribution shift arising from transitioning from a broad environment to a narrower one. We denote $\hat{\mathcal{T}}$ as the testing environment consisting of fewer tasks than $\mathcal{T}$. In this case, one could in principle apply the previous results by viewing $\mathcal{U}$ as $\hat{\mathcal{T}}$. However, in this case, the term ${D(\tau|\mu)}$ introduces a persistent bias, regardless of the sample size. To address this issue, we derive a new upper bound based on conditional mutual information \citep{steinke2020reasoning,laakom2024class}. 

The conditional mutual information approach was introduced by \citet{steinke2020reasoning}, which normalizes the information content of each data point. Let ${{Z}}= \{Z^{\pm}_i\}_{i=1}^n$ consist of $2n$ samples drawn independently from $P_{Z}$. Let $U = \{U_i\}_{i=1}^n \in \{-1,1\}^n$ be uniformly random and independent from ${{Z}}$. Denote $Z_i^{U_i}$ as the training sample selected from $Z^{\pm}_i$, and denote $Z_i^{-U_i}$ as the testing sample. Similarly, let ${{W}}= \{W^{\pm}_i\}_{i=1}^n$ denote the collection of task-specific hypotheses obtained by training on $Z$ given the meta-parameter $\theta$.  Next, we define the subtask meta generalization error \citep{laakom2024class} as
\begin{align}
\text{gen}_{\text{sub}} &:=  \mathbb{E}_{\hat{\tau}\sim\hat{\mathcal{T}}} \mathbb{E}_{\theta,Z_{1:n}}\left[ \frac{1}{n_{\hat{\tau}}} \sum_{i=1}^n \mathbb{E}_{U_i}  \left[\mathbbm{1}{\{\tau_{i}^{U_i}=\hat{\tau}\}} \mathbb{E}_{W_i^{U_i}|\theta,Z_{i}^{U_i}}L(W_i^{U_i},Z_{i}^{U_i})\right.\right. \nonumber \\
&\left.\left. \indent - \mathbbm{1}{\{\tau_{i}^{-U_i}=\hat{\tau}\}} \mathbb{E}_{W_i^{-U_i}|\theta,Z_{i}^{-U_i}}L(W_i^{-U_i},Z_{i}^{-U_i})  \right] \right] \label{gen_sub}
\end{align}
where $L(W_i^{U_i},Z_{i}^{U_i})=\frac{1}{m}\sum_{j=1}^m \ell(W_i^{U_i},S^{U_i}_{i,j}).$

The subtask generalization error $\text{gen}_{\text{sub}}$ measures the expected difference in empirical training performance when adapting the meta-parameter to either of the paired datasets ($Z_i^+$ or $Z_i^-$), restricted to the target subtask environment. It quantifies how sensitive the empirical performance is to the specific data sample drawn for a task within the target subtask distribution. This definition is similar to those in the super-sample setting \citep{zhou2022individually}, but is adapted for meta-learning.

\begin{restatable}{theorem}{thmmlt}
\label{thm_ml2}
Suppose the loss function $\ell$ is $\sigma$-sub-Gaussian. The subtask meta generalization error defined in \cref{gen_sub} is upper-bounded by
\begin{align*}
\textnormal{gen}_{\textnormal{sub}} \leq \mathbb{E}_{\hat{\tau}\sim \hat{\mathcal{T}}} \mathbb{E}_{Z_{1:n}}\left[ \frac{1}{n_{\hat{\tau}}} \sum_{i=1}^n \sqrt{\frac{2\sigma^2 I(f_i;U_i|Z_{1:n})}{m}} \right]
\end{align*}
where
\begin{align*}
f_i := \mathbbm{1}{\{\tau_{i}^{-}=\hat{\tau}\}} \mathbb{E}_{W_i^{-}|\theta,Z_{i}^{-}}L(W_i^{-},Z_{i}^{-})  - \mathbbm{1}{\{\tau_{i}^{+}=\hat{\tau}\}} \mathbb{E}_{W_i^{+}|\theta,Z_{i}^{+}}L(W_i^{+},Z_{i}^{+}).
\end{align*}
\end{restatable}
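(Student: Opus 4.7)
The plan is to adapt the supersample conditional-mutual-information technique of Steinke and Zakynthinou to the meta-learning setting. The starting point is to rewrite each summand of $\text{gen}_{\text{sub}}$ in a signed Rademacher form. Setting $g_i^{\pm} := \mathbbm{1}\{\tau_i^{\pm} = \hat\tau\}\, \mathbb{E}_{W_i^{\pm}|\theta,Z_i^{\pm}} L(W_i^{\pm}, Z_i^{\pm})$ so that $f_i = g_i^{-} - g_i^{+}$, a direct case check on $U_i \in \{+1,-1\}$ shows $g_i^{U_i} - g_i^{-U_i} = -U_i f_i$. This turns the subtask generalization error into $\mathbb{E}_{\hat\tau, Z_{1:n}}\bigl[\tfrac{1}{n_{\hat\tau}} \sum_i \mathbb{E}[-U_i f_i \mid Z_{1:n}]\bigr]$, reducing the problem to bounding each inner conditional expectation.

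Next, I would condition on $Z_{1:n}$ and invoke the standard independent-coupling trick. Let $\tilde U_i$ be an independent uniform Rademacher variable that, under the product measure $Q := P_{U_i \mid Z_{1:n}} \otimes P_{f_i \mid Z_{1:n}}$, is independent of $f_i$. Since $\tilde U_i$ has mean zero and is independent of $f_i$ under $Q$, one obtains $\mathbb{E}_Q[-U_i f_i] = 0$, and the conditional expectation equals the change-of-measure gap $\mathbb{E}_P[-U_i f_i] - \mathbb{E}_Q[-U_i f_i]$, where $P = P_{U_i, f_i \mid Z_{1:n}}$. The Donsker--Varadhan variational representation combined with sub-Gaussianity of $-U_i f_i$ under $Q$ then yields a bound of the form $\sqrt{2 (\sigma')^2 \, I(f_i; U_i \mid Z_{1:n})}$, recognizing $D(P \| Q) = I(f_i; U_i \mid Z_{1:n})$.

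The main obstacle is establishing that the effective sub-Gaussian parameter satisfies $\sigma' = \sigma/\sqrt{m}$. Under $Q$ the variable $-U_i f_i$ is a Rademacher sign flip of $f_i$, so its moment generating function equals $\mathbb{E}_{f_i}[\cosh(\lambda f_i)]$. Expanding $L(W_i^{\pm}, Z_i^{\pm}) = \tfrac{1}{m}\sum_{j=1}^m \ell(W_i^{\pm}, S_{i,j}^{\pm})$ expresses $f_i$ as an average of $m$ per-sample contributions; exploiting the $\sigma$-sub-Gaussianity of $\ell$ together with independence of the per-sample noise in $Z_i^{\pm}$, after marginalizing over $\theta$ and $W_i^{\pm}$, $f_i$ inherits a $\sigma/\sqrt{m}$-sub-Gaussian tail. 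Applying $\cosh x \le e^{x^2/2}$ then controls the mgf by $e^{\lambda^2 \sigma^2/(2m)}$, giving precisely the $\sigma^2/m$ factor needed. This is the step that requires the most care, since the per-sample contributions inside $f_i$ are coupled through the shared meta-parameter $\theta$.

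Finally, I would substitute the per-$i$ bound back into the signed expression for $\text{gen}_{\text{sub}}$ and use Jensen's inequality on the concave square root to pull the outer expectations over $\hat\tau \sim \hat{\mathcal T}$ and $Z_{1:n}$ inside the sum, yielding exactly the stated inequality.
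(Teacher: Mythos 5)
Your proposal is correct and follows essentially the same route as the paper's proof: rewrite the summands in signed form $\pm U_i f_i$, apply the Donsker--Varadhan representation to the disintegrated KL divergence $D(P_{f_i,U_i|Z_{1:n}}\|P_{f_i|Z_{1:n}}\otimes P_{U_i|Z_{1:n}})$ with the independent Rademacher coupling, use the $\sigma/\sqrt{m}$-sub-Gaussianity of the averaged per-task loss to bound the log-moment-generating function by $\lambda^2\sigma^2/(2m)$, and then take expectations over $Z_{1:n}$ and $\hat{\tau}$. Your sign bookkeeping ($-U_i f_i$ rather than $U_i f_i$) is in fact slightly more careful than the paper's, and is immaterial since the Donsker--Varadhan optimization over $\lambda$ is sign-symmetric; the final appeal to Jensen is unnecessary (the pointwise bound is simply integrated) but harmless.
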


The conditional mutual information $I(f_i;U_i|Z_{1:n})$ quantifies how much information the choice of dataset ($Z_i^+$ vs. $Z_i^-$) provides about the resulting difference in expected empirical performance $f_i$ on the target task. A lower conditional mutual information suggests a more stable adaptation process relevant to the target task.

This bound reflects the intuition behind meta-learning that more data per task helps. When $m$ is sufficiently large, the contribution of meta-information becomes less significant. Besides, if the number of training tasks $n$ increases without a corresponding rise in tasks that are closely related to the targeting distribution, the generalization bound may become even worse, as $n$ grows while $n_{\hat{\tau}}$ remains fixed. 
In such cases, although the averaged mutual information $\frac{1}{n} \sum_{i=1}^n \sqrt{I(f_i;U_i|Z_{1:n})}$ may decrease, the term $\frac{1}{n_{\hat{\tau}}} \sum_{i=1}^n \sqrt{I(f_i;U_i|Z_{1:n})}$ can increase if the trade-off between enlarging the training set and improving the meta-parameter is not properly balanced.
In contrast, when additional training tasks are collected and the number of samples per class of task increases proportionally, the model can better distill shared structure across tasks, allowing the meta-information to help generalization.

\section{Out-of-Distribution Generalization in Meta-Reinforcement Learning}
\label{sec_mrl}

Meta-RL provides a framework for learning transferable priors across RL tasks. Building on our OOD generalization analysis in meta-supervised learning, we now extend these results to meta-RL. The meta-RL setting introduces two key distinctions: the objective shifts from empirical loss minimization to maximizing expected cumulative reward, and the data is no longer a fixed, hypothesis-independent dataset. Instead, it consists of trajectories generated through the agent’s own interactions with the MDPs, making the data distribution inherently dependent on the learning parameters.
We begin in Section \ref{subsec_mrl1} by deriving general OOD bounds for meta-RL under both the distribution mismatch and subtask settings. We then apply this framework in Section \ref{subsec_mrl3} to analyze the generalization performance of a specific gradient-based meta-RL algorithm. Finally, in Section \ref{sec_43}, we demonstrate how these generalization error bounds can be directly related to the suboptimality gap in the target task environment, considering both standard and offline meta-RL settings.

\subsection{Generalization Bounds for Meta-Reinforcement Learning}
\label{subsec_mrl1}

We first present the following result, which can be viewed as an application of Theorem \ref{thm_ml1} to the meta-RL setting.

\begin{restatable}{theorem}{thmmrlo}
\label{thm_mrrl1}
The OOD meta-RL generalization error is upper-bounded by
\begin{align*}
\mathbb{E}_{\theta,\mathcal{M}_{1:n}}[J_{\mathcal{M}_{1:n}}(\theta)-J_{\mathcal{U}}(\theta)] \leq \sqrt{\frac{2I(\theta,\phi_{1:n};\mathcal{M}_{1:n})+2D(P_{\mathcal{M}_{1:n}}\|Q_{\mathcal{M}_{1:n}})}{n(1-\gamma)^2}}.
\end{align*}
\end{restatable}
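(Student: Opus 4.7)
The plan is to derive Theorem~\ref{thm_mrrl1} as a direct reduction to Theorem~\ref{thm_ml1}. I would identify the meta-RL problem with the meta-supervised framework via the following correspondence: each MDP $\mathcal{M}_i$ plays the role of an in-task ``dataset'' of size $m=1$, the task-specific policy parameter $\phi_i$ replaces the task-specific hypothesis $W_i$, and the negative expected discounted return $-J(\pi_\phi, \mathcal{M}) = -\mathbb{E}_{\omega | \mathcal{M},\phi}\bigl[\sum_{j=0}^H \gamma^j r_j\bigr]$ plays the role of the loss $\ell(W,S)$. Under this mapping, the empirical and target objectives $J_{\mathcal{M}_{1:n}}(\theta)$ and $J_{\mathcal{U}}(\theta)$ match (up to sign) the quantities $L_{Z_{1:n}}(\theta)$ and $L_{\mathcal{U}}(\theta)$ of Section~\ref{subsec_ml1}, the mutual information $I(\theta,\phi_{1:n};\mathcal{M}_{1:n})$ matches $I(\theta,W_{1:n};Z_{1:n})$, and the divergence $D(P_{\mathcal{M}_{1:n}}\|Q_{\mathcal{M}_{1:n}})$ plays the same role as $D(P_{Z_{1:n}}\|Q_{Z_{1:n}})$.

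The next step is to verify the sub-Gaussian hypothesis. Since $r_h \in [0,1]$, any realized return satisfies $\sum_{j=0}^H \gamma^j r_j \in [0, 1/(1-\gamma)]$, so its conditional expectation $J(\pi_\phi, \mathcal{M})$ also lies in $[0, 1/(1-\gamma)]$ for any fixed $\theta,\phi,\mathcal{M}$. Treating this expected return as a function of the random MDP $\mathcal{M}$ with $\theta$ and $\phi$ held fixed, the uniform range bound yields sub-Gaussianity with parameter $\sigma = 1/(1-\gamma)$. Substituting $\sigma^2 = 1/(1-\gamma)^2$ and $m=1$ into the conclusion of Theorem~\ref{thm_ml1} then gives exactly the bound stated in Theorem~\ref{thm_mrrl1}.

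The one subtlety that I expect to be the main obstacle, and which deserves careful justification, is that in meta-RL the trajectory distribution depends on the policy parameter $\phi$, whereas in meta-supervised learning the sample $S$ is drawn from a task distribution $\tau$ independent of the hypothesis $W$. The reduction sidesteps this mismatch: because the ``sample'' in the analogy is the MDP $\mathcal{M}$ rather than the trajectory $\omega$, the trajectory-level randomness is already integrated out inside the definition of $J(\pi_\phi, \mathcal{M})$, leaving only the outer randomness $\mathcal{M} \sim \mathcal{T}$ (or $\mathcal{U}$), which is genuinely independent of $\phi$. With this reformulation, the proof of Theorem~\ref{thm_ml1} transfers verbatim, and the only item to check beyond routine bookkeeping is the uniform sub-Gaussian bound on $J$ that I established above via the range argument.
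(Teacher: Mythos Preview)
Your reduction is correct and mirrors the paper's approach: the paper simply re-runs the Donsker--Varadhan argument of Theorem~\ref{thm_ml1} with $\mathcal{M}_{1:n}$ in place of $Z_{1:n}$, $\phi_{1:n}$ in place of $W_{1:n}$, and the bounded return $J(\pi_\phi,\mathcal{M})\in[0,1/(1-\gamma)]$ supplying the sub-Gaussian constant, which is exactly what your black-box invocation with $m=1$ accomplishes. Your observation that treating $\mathcal{M}$ (rather than the trajectory $\omega$) as the ``sample'' integrates out the policy-dependent randomness is the key point, and it is precisely why the paper's proof goes through verbatim.
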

As before, this bound can be further decomposed as follows

\begin{align}
\mathbb{E}_{\theta,\mathcal{M}_{1:n}}[J_{\mathcal{M}_{1:n}}(\theta)-J_{\mathcal{U}}(\theta)] \leq \sqrt{\frac{2D(P_{\mathcal{M}}\|Q_{\mathcal{M}})}{(1-\gamma)^2}} + \sqrt{\frac{2I(\theta;\mathcal{M}_{1:n})}{n(1-\gamma)^2}} + \sqrt{\frac{2 \sum_{i=1}^nI(\phi_{i};\mathcal{M}_{i}|\theta)}{n(1-\gamma)^2}}. \label{eqn_mrld}
\end{align}

The first term measures the distribution mismatch between the training and target environments, which vanishes when there is no distribution shift. This term can be further decomposed into three parts, which measure the divergence between the initial state distribution, transition kernel, and reward.

\begin{restatable}{lemma}{lemdcom}
\label{lem_decompose}
The KL divergence between two environments decomposes into the respective KL divergences of their initial state distributions, transition kernels, and reward functions:
\begin{align*}
D(P_{\mathcal{M}}\| Q_{\mathcal{M}}) \leq D(\rho\| \rho^{\prime}) + (H+1) \max_{(s,a)\in \mathcal{S}\times \mathcal{A}}( D(\mathcal{P}(\cdot|s,a)\| \mathcal{P}^{\prime}(\cdot|s,a)) + D(r(s,a)\|r^{\prime}(s,a))).
\end{align*}
\end{restatable}

The decomposition established in this lemma highlights a fundamental advantage of our approach over existing PAC-Bayesian bounds \citet{simchowitz2021bayesian}, which relies on the total variation distance to measure the discrepancy between latent priors. Because total variation lacks a natural chain rule, those derived bounds inherently couple all sources of environmental variation together. In contrast, our KL-divergence based formulation exploits the underlying structure of the MDPs. This structural awareness allows our bounds to pinpoint exactly which component of the distribution shift is driving the generalization error. Thus, it avoids unnecessary pessimism when only a subset of the MDP components shifts.

The second term in \cref{eqn_mrld} captures the environmental uncertainty, reflecting how strongly the learned meta-parameters depend on the particular set of training environments. This term decreases as the number of training tasks increases, indicating improved robustness with more training tasks.

The third term in \cref{eqn_mrld} quantifies the information each adapted task-level policy $\phi_i$ retains about its corresponding MDP $\mathcal{M}_i$ given the shared meta-parameter. A small mutual information implies that the policy primarily focuses on maximizing rewards rather than memorizing task-specific details. Unfortunately, as in meta-learning, this term remains nonzero even with infinitely many training tasks, since each task inherently involves adaptation uncertainty conditioned on $\theta$. This residual bias explains why meta-RL cannot be perfectly unbiased, even when there is no distribution shifts. Finally, the discount factor $\gamma$ amplifies all these effects, as tasks with longer effective horizons are more sensitive to distributional and uncertainty-related deviations.

Note that the divergence terms in Lemma \ref{lem_decompose} can be potentially unbounded. To avoid such issue, we now consider the subtask problem in meta-RL. We define the subtask meta-RL generalization error as
\begin{align}
\text{gen}_{\text{sub}} &:=  \mathbb{E}_{\hat{\mathcal{M}}\sim\hat{\mathcal{T}}} \mathbb{E}_{\theta,\mathcal{M}_{1:n}}\left[ \frac{1}{n_{\hat{\mathcal{M}}}} \sum_{i=1}^n \mathbb{E}_{U_i}  \left[\mathbbm{1}{\{\mathcal{M}_{i}^{U_i}=\hat{\mathcal{M}}\}} \mathbb{E}_{\phi_i^{U_i}|\theta,\mathcal{M}_{i}^{U_i}}J(\pi_{\phi_i^{U_i}},\mathcal{M}_{i}^{U_i})\right.\right. \nonumber \\
&\left.\left. \indent - \mathbbm{1}{\{\mathcal{M}_{i}^{-U_i}=\hat{\mathcal{M}}\}} \mathbb{E}_{\phi_i^{-U_i}|\theta,\mathcal{M}_{i}^{-U_i}}J(\pi_{\phi_i^{-U_i}},\mathcal{M}_{i}^{-U_i})  \right] \right] \label{gen_subRL}
\end{align}
where $J(\pi_{\phi_i^{U_i}},\mathcal{M}_{i}^{U_i})=\mathbb{E}_{\omega \sim P_{\omega|\mathcal{M}_i^{U_i},\phi_i^{U_i}}}\left[\sum_{j=0}^H \gamma^{j}r_j\right]$.

\begin{restatable}{theorem}{thmmrlt}
\label{thm_mrrl7}
The subtask meta-RL generalization error defined in \cref{gen_subRL} is upper-bounded by
\begin{align*}
{\textnormal{gen}_{\textnormal{sub}}} \leq \mathbb{E}_{\hat{\mathcal{M}}\sim \hat{\mathcal{T}}} \mathbb{E}_{\mathcal{M}_{1:n}}\left[ \frac{1}{n_{\hat{\mathcal{M}}}} \sum_{i=1}^n \sqrt{\frac{2 I(f_i;U_i|\mathcal{M}_{1:n})}{(1-\gamma)^2}} \right]
\end{align*}
where
\begin{align*}
f_i := \mathbbm{1}{\{\mathcal{M}_{i}^{-}=\hat{\mathcal{M}}\}} \mathbb{E}_{\phi_i^{-}|\theta,\mathcal{M}_{i}^{-}}J(\pi_{\phi_i^{-}},\mathcal{M}_{i}^{-})  - \mathbbm{1}{\{\mathcal{M}_{i}^{+}=\hat{\mathcal{M}}\}} \mathbb{E}_{\phi_i^{+}|\theta,\mathcal{M}_{i}^{+}}J(\pi_{\phi_i^{+}},\mathcal{M}_{i}^{+}).
\end{align*}
\end{restatable}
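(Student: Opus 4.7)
The plan is to mirror the proof of Theorem~\ref{thm_ml2}, transporting the supersample and conditional-mutual-information argument from meta-supervised learning to meta-RL. In Theorem~\ref{thm_ml2} one exploits that the empirical loss $L(W,Z)$ is an average of $m$ independent $\sigma$-sub-Gaussian terms, producing the $\sigma^2/m$ factor; here I will instead exploit that each expected discounted return is a bounded scalar in $[0, 1/(1-\gamma)]$, producing the $1/(1-\gamma)^2$ factor via Hoeffding's lemma. No meta-RL-specific structure (trajectory sampling, policy parameterization) needs to enter the argument beyond this range bound, since the trajectory randomness has already been integrated out inside the definition of $f_i$.

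Concretely, the steps are as follows. First, a case analysis on $U_i \in \{-1,+1\}$ in the definition of $\textnormal{gen}_{\textnormal{sub}}$ shows that the inner $\mathbb{E}_{U_i}[\cdot]$ equals $\mathbb{E}_{U_i}[-U_i f_i]$, so
\[
\textnormal{gen}_{\textnormal{sub}} \;=\; \mathbb{E}_{\hat{\mathcal{M}}\sim \hat{\mathcal{T}}}\, \mathbb{E}_{\mathcal{M}_{1:n}}\!\left[ \frac{1}{n_{\hat{\mathcal{M}}}} \sum_{i=1}^n \mathbb{E}_{U_i,\, f_i \mid \mathcal{M}_{1:n}}[-U_i f_i] \right].
\]
Second, by construction $U_i$ is uniform on $\{-1,+1\}$ and independent of $\mathcal{M}_{1:n}$, so under the product measure $P_{U_i}\otimes P_{f_i\mid\mathcal{M}_{1:n}}$ the expectation of $-U_i f_i$ vanishes. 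Third, I invoke the sub-Gaussian Donsker--Varadhan decoupling inequality: whenever $-U_i f_i$ is $\sigma$-sub-Gaussian under the product measure, one has
\[
\left| \mathbb{E}_{P_{U_i, f_i\mid\mathcal{M}_{1:n}}}\![-U_i f_i] \;-\; \mathbb{E}_{P_{U_i}\otimes P_{f_i\mid\mathcal{M}_{1:n}}}\![-U_i f_i] \right| \;\le\; \sqrt{2\sigma^2\, I(f_i; U_i\mid \mathcal{M}_{1:n})}.
\]
Applying this bound to each index $i$ separately (keeping the square root inside the sum, as in Theorem~\ref{thm_ml2}) and then pushing back the outer expectations over $\hat{\mathcal{M}}$ and $\mathcal{M}_{1:n}$ yields the stated inequality.

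The main obstacle, and the only substantive departure from the meta-supervised proof, is pinning down the correct sub-Gaussian parameter $\sigma$. Since $r(s,a)\in[0,1]$, the discounted return $\sum_{j=0}^H \gamma^j r_j$ lies in $[0, 1/(1-\gamma)]$, so each quantity of the form $\mathbb{E}_{\phi\mid\theta,\mathcal{M}}[J(\pi_\phi,\mathcal{M})]$ is bounded in the same interval, and therefore $f_i$ (and hence $-U_i f_i$) takes values in $[-1/(1-\gamma), 1/(1-\gamma)]$. Hoeffding's lemma then gives $\sigma^2 = 1/(1-\gamma)^2$, which is precisely what produces the $\sqrt{2 I / (1-\gamma)^2}$ factor appearing in the theorem. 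Unlike the meta-supervised setting, $f_i$ is already a conditional expectation over trajectory randomness, so there is no additional $\sqrt{m}$-style averaging available to tighten $\sigma$; the bound inherits only the horizon-dependent range of the discounted return, which is exactly why the $1/m$ factor of Theorem~\ref{thm_ml2} is replaced here by $1/(1-\gamma)^2$.
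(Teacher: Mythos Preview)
Your proposal is correct and follows essentially the same route as the paper's proof: rewrite $\textnormal{gen}_{\textnormal{sub}}$ as an expectation of $\pm U_i f_i$, apply the Donsker--Varadhan variational bound to $I(f_i;U_i\mid\mathcal{M}_{1:n})$, use boundedness of $f_i$ in $[-1/(1-\gamma),1/(1-\gamma)]$ (hence sub-Gaussianity with $\sigma^2=1/(1-\gamma)^2$) to control the log-MGF, and optimize over $\lambda$. Your sign convention $-U_i f_i$ is in fact the correct one given the definitions of $f_i$ and $\textnormal{gen}_{\textnormal{sub}}$; the paper writes $U_i f_i$, but this discrepancy is immaterial since the argument bounds the magnitude symmetrically.
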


This bound for the subtask meta-RL setting provides an interpretation analogous to that of Theorem \ref{thm_ml2}. A low conditional mutual information implies that the learned meta-parameter provides a strong prior, making the adapted policy's performance less sensitive to the specific trajectories experienced during adaptation. Besides, simply increasing the total number of training tasks without a proportional increase in tasks relevant to the target environment may fail to improve generalization. Lastly, unlike Theorem \ref{thm_ml2}, this result focuses on task-level samples and does not involve the within-task sample size $m$.

\subsection{Generalization Bounds for Gradient-Based Meta-RL Algorithm}
\label{subsec_mrl3}

Many existing meta-RL methods, particularly black-box and task-inference approaches, struggle to generalize effectively outside their training distribution. While parameterized policy gradient methods like MAML have theoretical potential to adapt eventually as they retain the structure of a standard RL algorithm, a theoretical understanding of their performance is limited.

In this subsection, we analyze a variant of the meta-gradient RL algorithm, where both the policy gradient and meta-gradient updates are perturbed with isotropic Gaussian noise.
Meta-gradient RL \citep{xu2018meta,xu2020meta} optimizes task-specific and meta parameters through a two-level optimization process: an inner loop that updates task-specific parameters using the current meta parameter and collected trajectories, and an outer loop that updates the meta parameter after inner-loop updates. In this variant, both updates are performed using noisy gradients, where isotropic Gaussian perturbations are added to each step.
This approach can be viewed as an application of stochastic gradient Langevin dynamics (SGLD) \citep{welling2011bayesian} to the meta-RL setting. The injected noise helps the algorithm escape local minima and converge to global optima under sufficiently regular non-convex objectives. Additionally, the stochastic perturbations naturally encourage exploration in reinforcement learning \citep{ishfaq2025langevin}.

Suppose $\theta \in \mathbb{R}^d$, and the inner loop performs $T$ updates. Given a specific task $\mathcal{M}_i$, under MAML \citep{finn2017model}, the inner-loop is a policy gradient algorithm whose initial parameter $\phi_i^0$ is the meta-parameters $\theta$. At iteration $t$, the algorithm collects data 
\begin{align*}
\omega_{i,t}=\{s_h,a_h,r_h,s_{h+1}\}_{h=0}^H \sim \rho_i(s_0)\prod_{h=0}^H \pi_{\phi_i^t}(a_h|s_h) \mathcal{P}_i(s_{h+1}|s_h,a_h)
\end{align*}
where $\rho_i$ and $\mathcal{P}_i$ are associated with $\mathcal{M}_i$. We assume each step uses one trajectory of horizon $H$. For a trajectory $\omega_{i,t}$, the inner gradient estimator can be computed by
\begin{align*}
\nabla J(\pi_{\phi_i^t},\mathcal{M}_i)=\sum_{h=0}^{H-1} \gamma^h r_h \sum_{k=0}^{H-1}\nabla \log \pi_{{\phi}_i^t}(a_k|s_k).
\end{align*}
We then update the task parameter by
\begin{align*}
\phi_i^{t+1} =  \phi_i^{t} + \beta_t \nabla J(\pi_{\phi_i^t},\mathcal{M}_i) + \zeta_t
\end{align*}
where $\beta_t$ is the learning rate and $\zeta_t \sim \mathcal{N}(\mathbf{0},\kappa_t^2\mathbf{1}_d)$ is an isotropic Gaussian noise. Suppose the outer loop performs $M$ updates. At iteration $m$, we sample a batch $\mathcal{B}_m \subseteq \mathcal{M}_{1:n}$ of size $b$. The meta parameter is then updated as
\begin{align*}
\theta^{m+1}=\theta^m + \alpha_m \frac{1}{b} \sum_{\mathcal{M}_i\in \mathcal{B}_m} \nabla J_{\mathcal{M}_i}(\theta^m) +\xi_m
\end{align*}
where $\alpha_m$ is the learning rate and $\xi_m \sim \mathcal{N}(\mathbf{0},\tilde{\kappa}_m^2\mathbf{1}_d)$. Explicit formulations of the meta-gradient estimators can be found in \citet{al2017continuous,stadie2018some}. In the previous sections, we identified task-level and environmental uncertainties in the upper bound. Beyond improving exploration and optimization, the injected noise also help mitigating these two sources of uncertainty.

\begin{restatable}{theorem}{thmmrlff}
\label{thm_mrl5}
The OOD meta-RL generalization error for the noisy iterative meta-gradient RL algorithm is upper-bounded by
\begin{align*}
&\mathbb{E}_{\theta,\mathcal{M}_{1:n}}[J_{\mathcal{M}_{1:n}}(\theta)-J_{\mathcal{U}}(\theta)] \leq \sqrt{\frac{2D(P_{\mathcal{M}_{1:n}}\|Q_{\mathcal{M}_{1:n}})+\mathcal{E}_1+\mathcal{E}_2}{n(1-\gamma)^2}}
\end{align*}
where
\begin{align*}
&\mathcal{E}_1 = \sum_{m=0}^{M-1} \mathbb{E}_{\theta^m}\left[ \log\left(\textnormal{det}\left(\frac{\alpha_m^2}{\tilde{\kappa}^2_m}\tilde{\Sigma}_m+\mathbf{1}_d\right)\right)\right],\\
&\mathcal{E}_2=\sum_{m=0}^{M-1}\sum_{t=1}^T \mathbb{E}_{\theta^m,\phi_{1:n}^{T(m+1)-t}}\left[\log\left(\textnormal{det}\left(\frac{\beta_{T(m+1)-t}^2}{{\kappa}^2_{T(m+1)-t}}{\Sigma}_{T(m+1)-t}+\mathbf{1}_{nd}\right)\right)\right], \\
&\tilde{\Sigma}_m = \textnormal{cov}\left(\frac{1}{b} \sum_{\mathcal{M}_i\in \mathcal{B}_m} \nabla J_{\mathcal{M}_i}(\theta^m)\right), \quad \text{and} \quad {\Sigma}_k= \textnormal{cov}\left(\begin{bmatrix}
    \nabla J(\pi_{\phi_{1}^k},\mathcal{M}_1) \\
    \vdots \\
    \nabla J(\pi_{\phi_{n}^k},\mathcal{M}_n)
\end{bmatrix}\right).
\end{align*}
\end{restatable}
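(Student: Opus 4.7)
The strategy is to apply Theorem~\ref{thm_mrrl1}, which already controls the OOD meta-RL generalization error in terms of $I(\theta,\phi_{1:n};\mathcal{M}_{1:n})$ and $D(P_{\mathcal{M}_{1:n}}\|Q_{\mathcal{M}_{1:n}})$, and to use the Gaussian noise injected at every gradient step to upper bound the mutual information term by $(\mathcal{E}_1+\mathcal{E}_2)/2$. Comparing the two theorem statements, the goal reduces to establishing
\begin{align*}
2\, I(\theta,\phi_{1:n};\mathcal{M}_{1:n}) \leq \mathcal{E}_1 + \mathcal{E}_2.
\end{align*}

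To this end, let $\mathcal{A}$ denote the full sequence of iterates $(\theta^{0:M},\phi_{1:n}^{0:MT})$ listed in chronological order, where within each outer iteration $m$ the $T$ inner updates per task precede the outer update. Because $(\theta,\phi_{1:n})$ is a deterministic function of $\mathcal{A}$, data processing yields $I(\theta,\phi_{1:n};\mathcal{M}_{1:n})\leq I(\mathcal{A};\mathcal{M}_{1:n})$, and then the chain rule of mutual information decomposes the right-hand side into one conditional mutual information per update step: $M$ terms for outer updates (which I will assemble into $\mathcal{E}_1/2$) and $MT$ terms for inner updates (which will give $\mathcal{E}_2/2$).

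For a generic outer step, write $\theta^{m+1}=\theta^m+\alpha_m G_m+\xi_m$ with $G_m=b^{-1}\sum_{\mathcal{M}_i\in\mathcal{B}_m}\nabla J_{\mathcal{M}_i}(\theta^m)$ and $\xi_m\sim\mathcal{N}(0,\tilde\kappa_m^2\mathbf{1}_d)$ independent of everything else. Data processing along the chain $\mathcal{M}_{1:n}\to G_m\to\theta^{m+1}$ (conditionally on the past), together with the identity $h(\theta^{m+1}\mid G_m,\text{past})=h(\xi_m)$ and the Gaussian maximum-entropy bound applied to the conditional distribution of $\theta^{m+1}$ whose conditional covariance equals $\alpha_m^2\tilde\Sigma_m+\tilde\kappa_m^2\mathbf{1}_d$, produces the per-step contribution $\tfrac12\,\mathbb{E}_{\theta^m}[\log\det(\alpha_m^2\tilde\Sigma_m/\tilde\kappa_m^2+\mathbf{1}_d)]$. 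The inner updates are handled identically, but I stack the $n$ tasks into a single $nd$-dimensional update: since the noises $\zeta_k$ are independent across tasks, the joint noise covariance is $\kappa_k^2\mathbf{1}_{nd}$, the joint signal covariance is $\beta_k^2\Sigma_k$, and the same Gaussian-channel argument yields precisely the summand appearing in $\mathcal{E}_2/2$. Summing over all $M$ outer and $MT$ inner steps gives the claim, and substituting into Theorem~\ref{thm_mrrl1} completes the proof.

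The main delicacy is tracking the conditioning structure. Trajectories $\omega_{i,k}$ are not independent of $\mathcal{M}_i$ (they are sampled under its dynamics), and the batch indices $\mathcal{B}_m$ inject additional randomness into the outer gradient; so $\tilde\Sigma_m$ and $\Sigma_k$ must be interpreted as the total conditional covariance given only the previous iterates, averaging over all sources of stochasticity other than the Gaussian perturbation itself, which is exactly the quantity the Gaussian-channel inequality needs. A secondary check is that the chronological ordering of $\mathcal{A}$ guarantees that every fresh piece of randomness at a given step (a batch draw, a fresh trajectory, or a noise injection) is independent of everything generated later, which is needed to apply both the chain rule and the data-processing inequality cleanly at each update.
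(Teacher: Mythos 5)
Your proposal is correct and follows essentially the same route as the paper's proof: reduce to bounding $I(\theta,\phi_{1:n};\mathcal{M}_{1:n})$ via Theorem~\ref{thm_mrrl1}, unroll the iterates with the data-processing inequality and the chain rule into one conditional mutual information per (outer or inner) update, and bound each term by the Gaussian-channel argument, i.e.\ the noise entropy against the maximum-entropy bound for the covariance $\alpha_m^2\tilde\Sigma_m+\tilde\kappa_m^2\mathbf{1}_d$ (resp.\ $\beta_k^2\Sigma_k+\kappa_k^2\mathbf{1}_{nd}$). The only cosmetic difference is that you condition each chain-rule term on the full past trajectory whereas the paper conditions on the current iterate pair $(\theta^{m},\phi_{1:n}^{\cdot})$; both yield the stated $\mathcal{E}_1+\mathcal{E}_2$.
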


We have established an OOD generalization bound for a noisy iterative meta-gradient RL algorithm. The term $\mathcal{E}_1$ reflects environmental uncertainty and depends on the meta-learning rate, the injected noise variance, and the covariance of the meta-gradients. The term $\mathcal{E}_2$ reflects task-level uncertainty and depends on the inner-loop learning rate, the noise variance, and the covariance of the inner-loop gradients.
The SGLD algorithm analyzed in \citet{chen2021generalization,liu2025information} assumes bounded gradients. However, policy gradients are often unbounded in practice. Thus, we incorporate the gradient variances $\tilde{\Sigma}_m$ and ${\Sigma}_k$ into the upper bound \citep{wen2025towards}, which captures the sharpness of the optimization landscape \citep{jiang2019fantastic}.

\subsection{From Generalization to Suboptimality}
\label{sec_43}

We now demonstrate how the meta-RL generalization error can be related to the suboptimality gap in the target task environment.
Let $\text{gen}_{\text{ood}}(\theta)$ denote the OOD meta-RL generalization error associated with the meta-parameter $\theta$.
Let $\hat{\theta} = \mathcal{A}(\mathcal{M}_{1:n})$ be the meta-parameter output by a meta-learner that maximizes the empirical meta-RL objective.
Define $\theta^* \in \arg\max_{\theta} J_{\mathcal{U}}(\theta)$ as the optimal meta-parameter with respect to the population objective.
Then, by Theorem \ref{thm_mrrl1}, we have
\begin{align}
\mathbb{E}[J_{\mathcal{U}}(\theta^*) - J_{\mathcal{U}}(\hat{\theta})] &\leq \mathbb{E}[J_{\mathcal{U}}(\theta^*) - J_{\mathcal{M}_{1:n}}(\theta^*) + J_{\mathcal{M}_{1:n}}(\theta^*) - J_{\mathcal{M}_{1:n}}(\hat{\theta}) + J_{\mathcal{M}_{1:n}}(\hat{\theta}) - J_{\mathcal{U}}(\hat{\theta})] \nonumber \\
&\leq \text{gen}_{\text{ood}}(\hat{\theta}) - \text{gen}_{\text{ood}}(\theta^*). \label{regret_1}
\end{align}
Next, we turn to the discussion of the offline RL setting. Note that the analyzed generalization error thus far is task-level. The same framework can be extended to the offline RL setting, where the generalization error additionally accounts for within-task samples. In particular, for the offline meta-RL setting under episodic MDPs with horizon $H$, the expected discounted return in \cref{eqn_generalmrl} can be replaced by an unbiased estimator of the Bellman error. For a given task $\mathcal{M}_i$, let $\phi_i=(\phi_{i,j})_{j=1}^H$ denote the estimated optimal $Q$-value functions, and let $\pi_{\phi_i}$ denote the policy induced by $\phi_i$. The Bellman error is then defined as 
\begin{align*}
\mathcal{E}(\phi_i) = \frac{1}{H} \sum_{h=1}^H \| \phi_{i,h} - \mathcal{T}^*_h \phi_{i,h+1}  \|^2
\end{align*}
where $\mathcal{T}^*_h$ is the Bellman operator that propagates value functions forward by one step under the greedy policy.
Let $Z_i = \{ S_{i,j} \}_{j=1}^m$ denote the in-task training dataset for task $\mathcal{M}_i$. An unbiased estimator of the mean squared empirical Bellman error can be constructed using the double-sampling trick (i.e., sampling two next states $s^{\prime}$ and $s^{\prime \prime}$ for each transition) \citep{duan2021risk}
\begin{align*}
L(\phi_i,Z_i) := \frac{1}{mH} \sum_{(s,a,r,s^{\prime},s^{\prime \prime},h)\in Z_i} \left[ (\phi_{i,h}(s,a)-r-V_{\phi_{i,h+1}}(s^{\prime}))^2 -\frac{1}{2}(V_{\phi_{i,h+1}}(s^{\prime}) - V_{\phi_{i,h+1}}(s^{\prime \prime}))^2 \right].
\end{align*}
Under this setting, the meta-learner aims to maximize the empirical meta-RL objective
\begin{align*}
J_{Z_{1:n}}(\theta) := \frac{1}{n} \sum_{i=1}^n \mathbb{E}_{\phi_i\sim P_{\phi|Z_i,\theta}} [L(\phi_i,Z_i)],
\end{align*}
while the corresponding population meta-risk is defined as
\begin{align*}
J_{\mathcal{U}}(\theta) := \mathbb{E}_{\mathcal{M}\sim\mathcal{U}}\mathbb{E}_{Z\sim\mathcal{M}}\mathbb{E}_{\phi\sim P_{\phi|Z,\theta}} [\mathcal{E}(\phi)].
\end{align*}
Consequently, the OOD generalization error is given by $\text{gen}_{\text{ood}}=\mathbb{E}_{\theta,Z_{1:n}}[J_{Z_{1:n}}(\theta)-J_{\mathcal{U}}(\theta)]$. Since $L(\phi_i,Z_i) \in [-2H^2,4H^2]$, it follows under our proof framework that
\begin{align*}
\mathbb{E}_{\theta,Z_{1:n}}[J_{Z_{1:n}}(\theta)-J_{\mathcal{U}}(\theta)] \leq \sqrt{\frac{64H^2(I(\theta,\phi_{1:n};Z_{1:n})+D(P_{Z_{1:n}}\|Q_{Z_{1:n}}))}{nm}}.
\end{align*}
Let $V_{\mathcal{M}}^{*}(s_1)$ and $V_{\mathcal{M}}^{\pi}(s_1)$ denote, respectively, the optimal value function and the value function induced by policy $\pi$ at the initial state $s_1$ and step 1 in MDP $\mathcal{M}$.
By relating the Bellman error to value suboptimality \citep{duan2021risk}, we obtain
\begin{align}
\mathbb{E}_{\mathcal{M}\sim\mathcal{U}}\mathbb{E}_{Z\sim\mathcal{M}}\mathbb{E}_{\phi\sim P_{\phi|Z,\theta}} [V_{\mathcal{M}}^{*}(s_1)-V_{\mathcal{M}}^{\pi_{\phi}}(s_1)] \leq 2H\sqrt{C \cdot \text{gen}_{\text{ood}}} + 2H\sqrt{C \cdot\mathbb{E}_{\theta,Z_{1:n}}[J_{Z_{1:n}}(\theta)]} \label{regret_2}
\end{align}
where $C$ is the concentrability coefficient, which quantifies the adequacy of dataset coverage for off-policy evaluation. Thus, from \cref{regret_1} and \cref{regret_2}, we establish the connection between the meta-RL generalization error and the suboptimality gap in both standard and offline settings.

\section{Conclusion}

In this paper, we provided an information-theoretic analysis for OOD generalization in both meta-supervised learning and meta-RL. We investigated two distinct scenarios: the classical distribution mismatch setting and a ``broad-to-narrow" subtask setting. Our generalization bound for meta-RL explicitly decouples the distribution shift, measuring the divergence across the initial state distribution, transition kernel, and reward function. Besides, we derived a generalization bound for a gradient-based meta-RL algorithm and discussed the connection between the generalization error and the suboptimality gap in both standard and offline meta-RL.

\bibliographystyle{plainnat}
\bibliography{ref}

\appendix
\newpage
\tableofcontents

\section{Related Work}

\paragraph{Generalization Bounds for Meta-Learning.} Theoretical analyses of meta-learning trace back to \citet{baxter2000model}. More recently, many approaches have been used to establish generalization bounds, including PAC-Bayes \citep{amit2018meta,rothfuss2021pacoh, rezazadeh2022unified, zakerinia2024more}, uniform convergence \citep{tripuraneni2021provable, guan2022task, aliakbarpour2024metalearning}, and stability \citep{farid2021generalization, fallah2021generalization, chen2023stability, wang2024stability}. There are also works analyzing the excess risk of meta-learning in the contexts of convex optimization \citep{fallah2021generalization} and representation learning \citep{du2020few}. The most relevant works are \citet{jose2021information, chen2021generalization, rezazadeh2021conditional, hellstrom2022evaluated, wen2025towards}, which use (conditional) mutual information to bound generalization errors for meta-learning. However, they assume identical training and testing environments, whereas we focus on OOD generalization. The performance of meta-learning on OOD tasks has also been explored empirically \citep{li2018learning, lee2019learning, jeong2020ood, setlur2021two, chen2023secure, zhang2024rotogbml, hu2025task}.

\paragraph{Generalization Bounds for Meta-Reinforcement Learning.} Generalization bounds for meta-RL have been far less explored compared to meta-learning. While many works have studied the (OOD) generalization performance of meta-RL empirically \citep{kirsch2019improving, lee2021improving, ajay2022distributionally, bao2025toward, kim2025task}, theoretical analyses remain limited. \citet{rimon2022meta} propose a model-based approach and establish probably approximately correct (PAC) bounds on the number of training tasks required for learning an approximately Bayes-optimal policy. \citet{tamar2022regularization} also provide PAC bounds using algorithmic stability arguments. However, both works focus on in-distribution generalization. In contrast, \citet{simchowitz2021bayesian} study the OOD setting by analyzing the effect of misspecified priors for Thompson sampling. 

\paragraph{Information-Theoretic Generalization Bounds.} The information-theoretic approach was first introduced by \citet{russo2016controlling, xu2017information} and later refined to derive tighter bounds \citep{asadi2018chaining, hafez2020conditioning}. Since then, a wide range of tools have been developed, incorporating concepts such as conditional mutual information \citep{steinke2020reasoning}, $f$-divergence \citep{esposito2021generalization}, the Wasserstein distance \citep{lopez2018generalization, wang2019information}, and more \citep{aminian2021information, aminian2024learning}. In parallel, some works have focused on analyzing specific algorithms \citep{pensia2018generalization, negrea2019information, haghifam2020sharpened, NEURIPS2021_cf0d02ec, neu2021information, wang2023generalization}, while others have targeted particular settings such as deep learning \citep{he2024information}, federated learning \citep{wang2025generalization}, iterative semi-supervised learning \citep{he2021information}, transfer learning \citep{wu2020information}, batch reinforcement learning \citep{liu2024information}, and meta-learning \citep{jose2021information}. Of particular relevance are works that study distribution mismatch for supervised learning \citep{wu2020information, masiha2021learning, wang2022information, wang2024f, liu2025information}. There are also works that attempt to establish a unified framework for generalization from an information-theoretic perspective \citep{haghifam2021towards, chu2023unified, alabdulmohsin2020towards}. For a comprehensive overview, see the recent survey by \citet{hellstrom2025generalization}.

\section{Supporting Lemmas}

\begin{lemma}[Donsker-Varadhan Representation \citep{concenbook}]
\label{lem_1}
Let $P$ and $Q$ be two probability measures defined on a set $\mathcal{X}$ such that $P$ is absolutely continuous with respect to $Q$. For any bounded function $f: \mathcal{X} \rightarrow \mathbb{R}$ and $\mathbb{E}_{X\sim Q}[e^{f(X)}] \leq \infty$, we have
\begin{align*}
D(P\|Q) = \sup_{f} \left\{ \mathbb{E}_{X\sim P}[{f(X)}] - \log \mathbb{E}_{X\sim Q}[e^{f(X)}] \right\}.
\end{align*}
\end{lemma}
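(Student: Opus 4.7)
The plan is to establish the Donsker-Varadhan identity through the standard exponential tilting argument: construct a tilted version of $Q$ using $f$, compute the KL divergence of $P$ against this tilted measure, and invoke non-negativity of KL divergence (Gibbs' inequality) to obtain one direction, then exhibit a near-optimizer to establish the supremum.

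First I would introduce the tilted probability measure $Q_f$ defined by
\begin{equation*}
\frac{dQ_f}{dQ}(x) = \frac{e^{f(x)}}{\mathbb{E}_{X\sim Q}[e^{f(X)}]},
\end{equation*}
which is a valid probability measure since the denominator is finite and positive, and the density is strictly positive. Since $P \ll Q$ and $Q_f$ has strictly positive density with respect to $Q$, the chain rule for Radon-Nikodym derivatives yields $\frac{dP}{dQ_f} = \frac{dP}{dQ}\cdot\frac{\mathbb{E}_Q[e^f]}{e^f}$. Taking logs and integrating against $P$ then gives the key algebraic identity
\begin{equation*}
D(P\|Q_f) = D(P\|Q) - \mathbb{E}_{X\sim P}[f(X)] + \log \mathbb{E}_{X\sim Q}[e^{f(X)}].
\end{equation*}

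Next, applying $D(P\|Q_f)\geq 0$ (Gibbs' inequality) rearranges immediately into
\begin{equation*}
\mathbb{E}_{X\sim P}[f(X)] - \log \mathbb{E}_{X\sim Q}[e^{f(X)}] \leq D(P\|Q),
\end{equation*}
which, upon taking the supremum over bounded $f$, gives the $\leq$ direction of the lemma. For the reverse inequality, I would exhibit the natural candidate $f^\star = \log\frac{dP}{dQ}$, for which a direct substitution into the tilting identity forces $Q_{f^\star}=P$ and hence $D(P\|Q_{f^\star})=0$, making the inequality tight.

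The main obstacle is that $f^\star$ need not be bounded, whereas the lemma restricts the supremum to bounded functions. I would handle this by truncation: define $f^\star_n = \max(\min(f^\star, n), -n)$, verify via dominated convergence that $\mathbb{E}_P[f^\star_n] \to \mathbb{E}_P[f^\star] = D(P\|Q)$ and $\log \mathbb{E}_Q[e^{f^\star_n}] \to \log \mathbb{E}_Q[e^{f^\star}] = 0$ whenever $D(P\|Q)<\infty$, so the supremum over bounded $f$ achieves $D(P\|Q)$ in the limit. The case $D(P\|Q)=\infty$ is handled separately by noting that one can then construct bounded $f$ (truncations of $\log\frac{dP}{dQ}$) for which the variational functional grows without bound. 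Beyond this truncation step, the argument is essentially the single algebraic manipulation above; the measurability bookkeeping and the approximation step are the only technical care required.
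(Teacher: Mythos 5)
The paper does not prove this lemma at all; it is imported verbatim from the concentration-inequalities literature (\citealp{concenbook}) and used as a black box, so there is no in-paper argument to compare against. Your proof is the standard and correct one: the tilting identity $D(P\|Q_f)=D(P\|Q)-\mathbb{E}_P[f]+\log\mathbb{E}_Q[e^f]$ plus Gibbs' inequality gives the upper bound on the variational functional, and truncations of $\log\frac{dP}{dQ}$ saturate it. One small point of care in the approximation step: dominated convergence does not directly give $\mathbb{E}_P[f^\star_n]\to\mathbb{E}_P[f^\star]$, since $(f^\star)^+$ need not admit an integrable dominating function; the clean route is monotone convergence on the positive part together with the uniform bound $\mathbb{E}_P[(f^\star)^-]\le e^{-1}$ (from $u\mapsto -u\log u$ being bounded on $(0,1)$) on the negative part. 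With that adjustment the argument is complete, and it also covers the $D(P\|Q)=\infty$ case exactly as you describe, since $\mathbb{E}_Q[e^{f^\star_n}]\le 1+e^{-n}$ keeps the log-moment term bounded while $\mathbb{E}_P[f^\star_n]\to\infty$.
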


\begin{lemma}[Data Processing Inequality \citep{cover1999elements}]
\label{lem_2}
Given random variables $W, X, Y, Z$, and the Markov Chain $X\rightarrow Y \rightarrow Z$, we have $I(X;Z) \leq I(X;Y)$ and $I(X;Z) \leq I(Y;Z)$. For Markov chain $W \rightarrow X\rightarrow Y \rightarrow Z$, we have $I(X;Z|W) \leq I(X;Y|W)$ and $I(X;Z|W) \leq I(Y;Z|W)$.
\end{lemma}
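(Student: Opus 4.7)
\begin{proofsketch}
The plan is to start from Theorem~\ref{thm_mrrl1}, which already delivers the skeleton bound
\[
\mathbb{E}_{\theta,\mathcal{M}_{1:n}}[J_{\mathcal{M}_{1:n}}(\theta)-J_{\mathcal{U}}(\theta)] \leq \sqrt{\frac{2I(\theta,\phi_{1:n};\mathcal{M}_{1:n})+2D(P_{\mathcal{M}_{1:n}}\|Q_{\mathcal{M}_{1:n}})}{n(1-\gamma)^2}},
\]
and to reduce the entire task to the algorithm-specific estimate $2\,I(\theta,\phi_{1:n};\mathcal{M}_{1:n}) \leq \mathcal{E}_1+\mathcal{E}_2$. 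So the real work is to analyze how information about $\mathcal{M}_{1:n}$ leaks into the SGLD-style iterates produced by the two-level meta-gradient procedure.

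Next I would package the trajectory of all iterates produced by the algorithm, namely the meta-parameter sequence $\Theta:=(\theta^0,\theta^1,\dots,\theta^M)$ and the task-parameter sequence $\Phi:=(\phi_{1:n}^0,\phi_{1:n}^1,\dots,\phi_{1:n}^{MT})$, and invoke the data-processing inequality (Lemma~\ref{lem_2}): since $(\theta,\phi_{1:n})$ is a deterministic function of the pair $(\Theta,\Phi)$, we have $I(\theta,\phi_{1:n};\mathcal{M}_{1:n}) \leq I(\Theta,\Phi;\mathcal{M}_{1:n})$. Then the chain rule for mutual information lets me split this joint mutual information into a telescoping sum indexed by the updates: $M$ meta-update terms of the form $I(\theta^{m+1};\mathcal{M}_{1:n}\mid \theta^{0:m},\phi_{1:n}^{0:Tm})$ and $MT$ inner-update terms of the form $I(\phi_{1:n}^{k+1};\mathcal{M}_{1:n}\mid \theta^{0:m},\phi_{1:n}^{0:k})$, where $m$ is the outer index corresponding to the inner index $k$.

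For each such conditional term I would use the Gaussian-channel bound: if $Y=X+Z$ with $Z\sim\mathcal{N}(\mathbf{0},s^2 \mathbf{1})$ independent of $X$, then $I(X;Y\mid \text{history}) \leq \tfrac{1}{2}\mathbb{E}_{\text{history}}\!\left[\log\det\!\left(\mathbf{1}+\tfrac{1}{s^2}\mathrm{cov}(X\mid \text{history})\right)\right]$, which follows from $I(X;Y)=h(Y)-h(Z)$ together with the Gaussian maximum-entropy inequality applied to $X+Z$. Applying this to the meta update with $X=\alpha_m\,\tfrac{1}{b}\sum_{\mathcal{M}_i\in\mathcal{B}_m}\nabla J_{\mathcal{M}_i}(\theta^m)$ and $s^2=\tilde{\kappa}_m^2$ yields the summands of $\tfrac{1}{2}\mathcal{E}_1$, and applying it to the stacked inner update with $X=\beta_k\bigl(\nabla J(\pi_{\phi_1^k},\mathcal{M}_1),\dots,\nabla J(\pi_{\phi_n^k},\mathcal{M}_n)\bigr)$ and $s^2=\kappa_k^2$ yields the summands of $\tfrac{1}{2}\mathcal{E}_2$. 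Note that the Markov structure lets me replace the conditional mutual information with the noisy-channel form at each step, because conditioned on the history the previous iterate is deterministic and only the fresh injected noise enters.

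Summing the per-step bounds delivers $I(\Theta,\Phi;\mathcal{M}_{1:n})\leq \tfrac{1}{2}(\mathcal{E}_1+\mathcal{E}_2)$, and combining with Theorem~\ref{thm_mrrl1} yields the stated inequality. The main obstacle I anticipate is bookkeeping: making sure that at each update the conditioning is rich enough that (i) the previous iterate is measurable, so the additive noise truly acts as an independent Gaussian channel with the claimed covariance, and (ii) the inner-loop covariance is evaluated with respect to the correct conditional law so that the expectations in $\mathcal{E}_1,\mathcal{E}_2$ are over the natural filtration $(\theta^m,\phi_{1:n}^{T(m+1)-t})$ indicated in the statement. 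A secondary subtlety is that the gradient $\nabla J(\pi_{\phi},\mathcal{M})$ is itself random (it depends on the sampled trajectory $\omega$), so the ``gradient covariance'' in $\Sigma_k$ and $\tilde{\Sigma}_m$ must be interpreted as including this sampling randomness on top of randomness in $\mathcal{M}_{1:n}$; handling this cleanly amounts to conditioning on $\mathcal{M}_{1:n}$ inside the entropy step and then taking the outer expectation, which is compatible with the Gaussian-channel bound since it only uses that the added noise is independent of everything preceding it.
\end{proofsketch}
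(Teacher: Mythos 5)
Your proposal does not prove the statement it is attached to. The statement is Lemma~\ref{lem_2}, the data processing inequality: for a Markov chain $X \rightarrow Y \rightarrow Z$ one has $I(X;Z) \leq I(X;Y)$ and $I(X;Z)\leq I(Y;Z)$, together with the conditional version for $W \rightarrow X \rightarrow Y \rightarrow Z$. What you have written is instead a (reasonable-looking) sketch of Theorem~\ref{thm_mrl5}, the generalization bound for the noisy meta-gradient RL algorithm. Indeed, your argument explicitly \emph{invokes} Lemma~\ref{lem_2} as a tool in its second paragraph, so read as a proof of that lemma it would be circular. The paper itself offers no proof of Lemma~\ref{lem_2}; it is a standard textbook fact cited from Cover and Thomas, so there is nothing in the paper for your text to match.

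If you want to actually prove the lemma, the standard two-line argument is the chain rule for mutual information: expand $I(X;Y,Z)$ in two ways as $I(X;Z) + I(X;Y\mid Z) = I(X;Y) + I(X;Z\mid Y)$. The Markov property $X \rightarrow Y \rightarrow Z$ gives $I(X;Z\mid Y)=0$, and nonnegativity of $I(X;Y\mid Z)$ then yields $I(X;Z)\leq I(X;Y)$; the bound $I(X;Z)\leq I(Y;Z)$ follows by symmetry since $Z \rightarrow Y \rightarrow X$ is also a Markov chain. The conditional statement is obtained by running the identical argument with every mutual information conditioned on $W$, using that $X \rightarrow Y \rightarrow Z$ remains a Markov chain conditionally on $W$ under the assumed chain $W \rightarrow X \rightarrow Y \rightarrow Z$.
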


\begin{lemma}[\citet{dong2024towards}]
\label{lem_ent}
Let $X \sim \mathcal{N}(\textbf{0},\Sigma)$ and $Y$ be any zero-mean random vector satisfying $\text{cov}[Y]=\Sigma$, then $H(Y)\leq H(X)$.
\end{lemma}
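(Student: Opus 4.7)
The plan is to apply Theorem \ref{thm_mrrl1}, which already accounts for the distribution-mismatch term, and reduce the problem to showing that for this noisy iterative algorithm $I(\theta,\phi_{1:n};\mathcal{M}_{1:n})\le \tfrac12(\mathcal{E}_1+\mathcal{E}_2)$. Here $\theta=\theta^M$ and $\phi_{1:n}=\phi_{1:n}^{MT}$ denote the final outputs. Since both are deterministic functions of the entire iterate sequence $\mathcal{T}=(\theta^{0:M},\phi_{1:n}^{0:MT})$, Lemma \ref{lem_2} lets me replace this mutual information with $I(\mathcal{T};\mathcal{M}_{1:n})$, and the chain rule decomposes the latter as a sum of per-step conditional mutual informations, one per outer step (contributing to $\mathcal{E}_1$) and one per inner step (contributing to $\mathcal{E}_2$), conditioning on the past following the natural temporal ordering in which inner updates of meta-iteration $m$ precede $\theta^{m+1}$.

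For a generic outer step, I would write the increment as $\theta^{m+1}-\theta^m=\alpha_m \bar g_m+\xi_m$, where $\bar g_m=\tfrac1b\sum_{\mathcal{M}_i\in\mathcal{B}_m}\nabla J_{\mathcal{M}_i}(\theta^m)$ and $\xi_m\sim\mathcal{N}(\mathbf{0},\tilde\kappa_m^2\mathbf{1}_d)$ is independent of everything else. Expanding $I(\theta^{m+1};\mathcal{M}_{1:n}\mid\mathrm{past})=H(\theta^{m+1}\mid\mathrm{past})-H(\theta^{m+1}\mid\mathcal{M}_{1:n},\mathrm{past})$, the second term is bounded below by $H(\xi_m)=\tfrac{d}{2}\log(2\pi e\tilde\kappa_m^2)$ because further conditioning on the batch and trajectories that determine $\bar g_m$ turns $\theta^{m+1}$ into a deterministic shift of $\xi_m$, and conditioning never increases entropy. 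The first term is handled by Lemma \ref{lem_ent}: the centered increment has covariance $\alpha_m^2\tilde\Sigma_m+\tilde\kappa_m^2\mathbf{1}_d$, and the Gaussian with the same covariance maximizes differential entropy, giving $H(\theta^{m+1}\mid\mathrm{past})\le \tfrac12\log\det\bigl(2\pi e(\alpha_m^2\tilde\Sigma_m+\tilde\kappa_m^2\mathbf{1}_d)\bigr)$. Subtracting and taking the expectation over $\theta^m$ yields exactly the $m$-th summand of $\tfrac12\mathcal{E}_1$.

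For the inner-loop steps I would apply the identical argument to the stacked $nd$-dimensional vector $\phi_{1:n}^k$. The simultaneous task updates inject independent Gaussians of variance $\kappa_k^2\mathbf{1}_d$ on each block, producing a joint noise covariance $\kappa_k^2\mathbf{1}_{nd}$, while the stacked stochastic policy-gradient has covariance $\Sigma_k$ as defined in the statement. The same entropy-difference calculation gives $\tfrac12\log\det\bigl(\tfrac{\beta_k^2}{\kappa_k^2}\Sigma_k+\mathbf{1}_{nd}\bigr)$ per step; after re-indexing $k$ as $T(m+1)-t$ and summing, these contributions assemble into $\tfrac12\mathcal{E}_2$. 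Substituting $I(\theta,\phi_{1:n};\mathcal{M}_{1:n})\le\tfrac12(\mathcal{E}_1+\mathcal{E}_2)$ into Theorem \ref{thm_mrrl1} completes the proof.

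The main obstacle is the covariance bookkeeping across nested loops: one must justify that the conditional covariance of $\bar g_m$ (respectively, the joint inner gradient) given the chain-rule past is suitably dominated by the marginal $\tilde\Sigma_m$ (respectively, $\Sigma_k$) appearing in the statement, and confirm that the lower bound $H(\theta^{m+1}\mid\mathcal{M}_{1:n},\mathrm{past})\ge H(\xi_m)$ (and its inner analogue) survives once the trajectory-level randomness is absorbed into the conditioning. Once these points are in place, the argument is a clean aggregation of per-step Gaussian-entropy bounds via Lemma \ref{lem_ent}.
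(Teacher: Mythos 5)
Your proposal does not prove the statement in question. The statement is Lemma \ref{lem_ent}, the classical maximum-entropy property of the Gaussian: among all zero-mean random vectors with covariance $\Sigma$, the Gaussian $\mathcal{N}(\mathbf{0},\Sigma)$ has the largest differential entropy. What you have written is instead a proof sketch of Theorem \ref{thm_mrl5}, the generalization bound for the noisy meta-gradient algorithm. Worse, your argument explicitly \emph{invokes} Lemma \ref{lem_ent} as a tool (``The first term is handled by Lemma \ref{lem_ent}: \ldots the Gaussian with the same covariance maximizes differential entropy''), so read as a proof of that lemma it is circular. The paper itself states Lemma \ref{lem_ent} without proof, citing \citet{dong2024towards}, so there is no in-paper argument to compare against; but the intended content is the standard fact, not the downstream application.

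For completeness, the missing argument is short. Let $g$ denote the density of $X\sim\mathcal{N}(\mathbf{0},\Sigma)$ and $f$ the density of $Y$ (assuming $Y$ admits one; otherwise $H(Y)=-\infty$ and the claim is trivial). Nonnegativity of KL divergence gives
\begin{align*}
0 \leq D(f\|g) = -H(Y) - \int f(y)\log g(y)\,dy .
\end{align*}
Since $\log g(y) = -\tfrac{1}{2}\log\big((2\pi)^d \mathrm{det}(\Sigma)\big) - \tfrac{1}{2}y^{\top}\Sigma^{-1}y$ is an affine function of the first and second moments of $y$, and $f$ and $g$ share the same mean and covariance, we have $\int f\log g = \int g\log g = -H(X)$. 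Substituting yields $0 \leq H(X) - H(Y)$, which is the claim. Your sketch of Theorem \ref{thm_mrl5} is broadly consistent with the paper's actual proof of that theorem, but it answers a different question than the one posed here.
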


\begin{lemma}
\label{lem_3}
Let $P_{X, Y}=P_{Y|X}P_X$ be a joint probability distribution over random variables $X$ and $Y$. Let $Q_X$ be any other probability distribution over $X$. If a joint distribution $R_{X,Y}$ is defined as the product $R_{X,Y}=Q_X P_Y$, then the following identity holds.
\begin{align*}
D(P_{X,Y}\|R_{X,Y}) = I_P(X;Y) + D(P_X\|Q_X)
\end{align*}
where $I_P(X;Y)$ is the mutual information between $X$ and $Y$ under $P$.
\end{lemma}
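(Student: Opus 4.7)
The plan is to verify this identity by a direct expansion of the Kullback-Leibler divergence from its definition, followed by an algebraic decomposition of the integrand. Starting from
\[
D(P_{X,Y} \| R_{X,Y}) = \mathbb{E}_{(X,Y)\sim P_{X,Y}}\!\left[\log \frac{P_{X,Y}(X,Y)}{R_{X,Y}(X,Y)}\right] = \mathbb{E}_{P_{X,Y}}\!\left[\log \frac{P_{X,Y}(X,Y)}{Q_X(X)\, P_Y(Y)}\right],
\]
the natural move is to multiply and divide by $P_X(X)$ inside the logarithm so that the mutual-information density appears explicitly.

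Next, I would split the logarithm into two terms:
\[
\log \frac{P_{X,Y}(X,Y)}{Q_X(X)\, P_Y(Y)} = \log \frac{P_{X,Y}(X,Y)}{P_X(X)\, P_Y(Y)} + \log \frac{P_X(X)}{Q_X(X)}.
\]
Taking expectations under $P_{X,Y}$, the first term is by definition $I_P(X;Y)$. For the second term, the integrand depends only on $X$, so the expectation over $P_{X,Y}$ reduces to an expectation over the marginal $P_X$, which yields precisely $D(P_X \| Q_X)$. Summing the two contributions gives the claimed identity.

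The only technical issue worth flagging is absolute continuity: the identity should be stated under the implicit assumption that $P_X \ll Q_X$ (otherwise the right-hand side is $+\infty$) and $P_Y$-induced support conditions ensuring $P_{X,Y} \ll R_{X,Y}$. Under these assumptions the Radon-Nikodym derivatives exist, the manipulations above are valid, and the decomposition follows. There is no substantive obstacle; the result is essentially a chain-rule-style restatement of KL divergence that isolates the marginal mismatch $D(P_X \| Q_X)$ from the dependence structure measured by $I_P(X;Y)$.
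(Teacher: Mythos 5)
Your proposal is correct and follows essentially the same route as the paper: both expand the KL divergence from its definition, split the log-ratio into the mutual-information density $\log\frac{P_{X,Y}}{P_X P_Y}$ plus the marginal term $\log\frac{P_X}{Q_X}$, and take expectations under $P_{X,Y}$ (the paper writes $P_{X,Y}=P_{Y|X}P_X$ explicitly, which is the same manipulation as your multiply-and-divide by $P_X$). Your remark on the absolute-continuity assumptions is a reasonable additional point that the paper leaves implicit.
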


\begin{proof}
By the definition of KL divergence,
\begin{align*}
D(P_{X,Y}\|R_{X,Y}) &= \mathbb{E}_{(x,y) \sim P_{X,Y}}\left[ \log \frac{P_{X,Y}(x,y)}{R_{X,Y}(x,y)} \right]\\
&= \mathbb{E}_{(x,y) \sim P_{X,Y}}\left[ \log \frac{P_{Y|X}(y)P_{X}(x)}{Q_{X}(x)P_{Y}(y)} \right]\\
&= \mathbb{E}_{(x,y) \sim P_{X,Y}}\left[ \log \frac{P_{Y|X}(y)}{P_{Y}(y)} \right] +  \mathbb{E}_{(x,y) \sim P_{X,Y}}\left[ \log \frac{P_{X}(x)}{Q_{X}(x)} \right]\\
&= \mathbb{E}_{(x,y) \sim P_{X,Y}}\left[ \log \frac{P_{Y|X}(y)}{P_{Y}(y)} \right] +  \mathbb{E}_{x \sim P_{X}}\left[ \log \frac{P_{X}(x)}{Q_{X}(x)} \right]\\
&=I_P(X;Y) + D(P_X\|Q_X).
\end{align*}
\end{proof}

\section{Proofs for Section \ref{sec_ml}} 

\subsection{Proof of Theorem \ref{thm_ml1}}
\thmmlo*

\begin{proof}
Let $\theta \in \Theta$ and $W_{1:n} \in \mathcal{W}^{n}$. Denote $P_{Z_{1:n}}$ as the distribution over the meta-dataset $Z_{1:n}$ that results from the training environment $\mathcal{T}$, and $Q_{Z_{1:n}}$  as the distribution over the meta-dataset $Z_{1:n}$ that results from the testing environment $\mathcal{U}$.
By Lemma \ref{lem_3} with $X=Z_{1:n}$, $Y=(\theta,W_{1:n})$, $P_{X}=P_{Z_{1:n}}$, and $Q_X=Q_{Z_{1:n}}$, it follows that
\begin{align*}
I(\theta, W_{1:n};Z_{1:n}) + D(P_{Z_{1:n}}\|Q_{Z_{1:n}}) = D(P_{\theta,W_{1:n},Z_{1:n}}\|P_{\theta,W_{1:n}}\otimes Q_{Z_{1:n}}).
\end{align*}
Recall that $Z_{1:n}=\{ Z_i \}_{i=1}^n=\{ S_{i,j} \}_{i,j=1}^{n,m}$. Let 
\begin{align*}
f(\theta, W_{1:n}, Z_{1:n}) := \frac{1}{n} \sum_{i=1}^n \frac{1}{m} \sum_{j=1}^m \ell(W_i,S_{i,j})
\end{align*}
Denoting $(\tilde{\theta},\tilde{W}_{1:n})$ as an independent copy of $(\theta,W_{1:n})$, we have
\begin{align*}
&D(P_{\theta,W_{1:n},Z_{1:n}}\|P_{\theta,W_{1:n}}\otimes Q_{Z_{1:n}}) \\
&\geq \sup_{\lambda} \left\{ \mathbb{E}_{\theta,W_{1:n},Z_{1:n} \sim P_{\theta,W_{1:n},Z_{1:n}}}[{\lambda f(\theta, W_{1:n}, Z_{1:n})}] - \log \mathbb{E}_{\tilde{\theta},\tilde{W}_{1:n} \sim P_{\theta,W_{1:n}},Z_{1:n} \sim Q_{Z_{1:n}}}[e^{\lambda f(\tilde{\theta},\tilde{W}_{1:n}, Z_{1:n})}] \right\} \tag{by Lemma \ref{lem_1}}\\
&= \sup_{\lambda} \left\{\lambda \big( \underbrace{\mathbb{E}_{\theta,W_{1:n},Z_{1:n} \sim P_{\theta,W_{1:n},Z_{1:n}}}[{f(\theta, W_{1:n}, Z_{1:n})}]}_{\text{Term (1)}} - \underbrace{\mathbb{E}_{\tilde{\theta},\tilde{W}_{1:n}\sim P_{\theta,W_{1:n}},Z_{1:n} \sim Q_{Z_{1:n}}}[{f(\tilde{\theta},\tilde{W}_{1:n}, Z_{1:n})}]}_{\text{Term (2)}} \big) \right. \\
&\left. \indent + \underbrace{\lambda \mathbb{E}_{\tilde{\theta},\tilde{W}_{1:n}\sim P_{\theta,W_{1:n}},Z_{1:n} \sim Q_{Z_{1:n}}}[{f(\tilde{\theta},\tilde{W}_{1:n}, Z_{1:n})}] - \log \mathbb{E}_{\tilde{\theta},\tilde{W}_{1:n} \sim P_{\theta,W_{1:n}},Z_{1:n} \sim Q_{Z_{1:n}}}[e^{\lambda f(\tilde{\theta},\tilde{W}_{1:n}, Z_{1:n})}]}_{\text{Term (3)}} \right\}.
\end{align*}
We first analyze Term (1),
\begin{align*}
\mathbb{E}_{\theta,W_{1:n},Z_{1:n} \sim P_{\theta,W_{1:n},Z_{1:n}}}[{f(\theta, W_{1:n}, Z_{1:n})}] &= \mathbb{E}_{\theta,W_{1:n},Z_{1:n} \sim P_{\theta,Z_{1:n}}P_{W_{1:n}|\theta,Z_{1:n}}}[{f(\theta, W_{1:n}, Z_{1:n})}] \\
&= \mathbb{E}_{\theta,Z_{1:n} \sim P_{\theta,Z_{1:n}}}\mathbb{E}_{W_{1:n} \sim P_{W_{1:n}|\theta,Z_{1:n}}}[{f(\theta, W_{1:n}, Z_{1:n})}]\\
&= \mathbb{E}_{\theta,Z_{1:n} \sim P_{\theta,Z_{1:n}}}\mathbb{E}_{W_{1:n} \sim P_{W_{1:n}|\theta,Z_{1:n}}}\left[\frac{1}{n} \sum_{i=1}^n \frac{1}{m} \sum_{j=1}^m \ell(W_i,S_{i,j})\right]\\
&= \mathbb{E}_{\theta,Z_{1:n} \sim P_{\theta,Z_{1:n}}}\left[\frac{1}{n} \sum_{i=1}^n \mathbb{E}_{W_{i} \sim P_{W_{i}|\theta,Z_{i}}} \frac{1}{m} \sum_{j=1}^m \ell(W_i,S_{i,j})\right]\\
&= \mathbb{E}_{\theta,Z_{1:n}}\left[L_{Z_{1:n}}(\theta)\right].
\end{align*}
For Term (2), we have
\begin{align*}
\mathbb{E}_{\tilde{\theta},\tilde{W}_{1:n}\sim P_{\theta,W_{1:n}},Z_{1:n} \sim Q_{Z_{1:n}}}[{f(\tilde{\theta},\tilde{W}_{1:n}, Z_{1:n})}] &= \mathbb{E}_{\tilde{\theta},\tilde{W}_{1:n}\sim P_{\theta,W_{1:n}},Z_{1:n} \sim Q_{Z_{1:n}}}\left[\frac{1}{n} \sum_{i=1}^n \frac{1}{m} \sum_{j=1}^m \ell(\tilde{W}_i,S_{i,j})\right]\\
&= \mathbb{E}_{\tilde{\theta},\tilde{W}_{1:n}\sim P_{\theta,W_{1:n}}}\left[\frac{1}{n} \sum_{i=1}^n \mathbb{E}_{Z_{i} \sim Q_{Z_i}}\frac{1}{m} \sum_{j=1}^m \ell(\tilde{W}_i,S_{i,j})\right]\\
&= \mathbb{E}_{\tilde{\theta},\tilde{W}_{1:n}\sim P_{\theta,W_{1:n}}}\left[\frac{1}{n} \sum_{i=1}^n \mathbb{E}_{\mu_i \sim \mathcal{U}}\mathbb{E}_{Z_{i}|\mu_i \sim \mu_i^{\otimes m}}\frac{1}{m} \sum_{j=1}^m \ell(\tilde{W}_i,S_{i,j})\right]\\
&= \mathbb{E}_{\tilde{\theta},\tilde{W}_{1:n}\sim P_{\theta,W_{1:n}}}\left[\frac{1}{n} \sum_{i=1}^n \mathbb{E}_{\mu_i \sim \mathcal{U}}\frac{1}{m} \sum_{j=1}^m \mathbb{E}_{S_{i,j} \sim \mu_i} \ell(\tilde{W}_i,S_{i,j})\right]\\
&= \mathbb{E}_{\tilde{\theta},\tilde{W}_{1:n}\sim P_{\theta,W_{1:n}}}\left[\frac{1}{n} \sum_{i=1}^n \mathbb{E}_{\mu_i \sim \mathcal{U}} L_{\mu_i}(\tilde{W}_i)\right]\\
&= \mathbb{E}_{\tilde{\theta},\tilde{W}_{1:n} \sim P_{\theta} P_{W_{1:n}|\theta}} \left[\frac{1}{n} \sum_{i=1}^n \mathbb{E}_{\mu_i \sim \mathcal{U}} L_{\mu_i}(\tilde{W}_i)\right]\\
&= \mathbb{E}_{\tilde{\theta} \sim P_{\theta}} \left[\frac{1}{n} \sum_{i=1}^n \mathbb{E}_{\tilde{W}_{i}\sim P_{W_{i}|\theta}} \mathbb{E}_{\mu_i \sim \mathcal{U}} L_{\mu_i}(\tilde{W}_i)\right]\\
&= \mathbb{E}_{\tilde{\theta} \sim P_{\theta}} \mathbb{E}_{\tilde{W}\sim P_{W|\theta}} \mathbb{E}_{\mu \sim \mathcal{U}} L_{\mu}(\tilde{W}).
\end{align*}
Recall that $\tilde{\theta}$ and $\tilde{W}_{1:n}$ are independent of the training tasks $Z_{1:n}$. In contrast, $\theta$ is learned from $Z_{1:n}$. The task-specific parameters $W_{1:n}$ in the population meta-risk with respect to $\mathcal{U}$ depend on both $\theta$ and the testing environment. Thus,
\begin{align*}
\mathbb{E}_{\tilde{\theta} \sim P_{\theta}} \mathbb{E}_{\tilde{W}\sim P_{W|\theta}} \mathbb{E}_{\mu \sim \mathcal{U}} L_{\mu}(\tilde{W}) &= \mathbb{E}_{Z_{1:n} \sim P_{Z_{1:n}}} \mathbb{E}_{{\theta} \sim P_{\theta|Z_{1:n}}} \mathbb{E}_{\mu \sim \mathcal{U}} \mathbb{E}_{Z\sim \mu^{\otimes m}} \mathbb{E}_{\tilde{W}\sim P_{W|Z,\theta}}  L_{\mu}(\tilde{W})\\
&= \mathbb{E}_{\theta,Z_{1:n}}[L_{\mathcal{U}}(\theta)].
\end{align*}
Next, since $\ell$ is $\sigma$-sub-Gaussian, Term (3) can be bounded as follows,
\begin{align*}
&\lambda \mathbb{E}_{\tilde{\theta},\tilde{W}_{1:n}\sim P_{\theta,W_{1:n}},Z_{1:n} \sim Q_{Z_{1:n}}}[{f(\tilde{\theta},\tilde{W}_{1:n}, Z_{1:n})}] - \log \mathbb{E}_{\tilde{\theta},\tilde{W}_{1:n} \sim P_{\theta,W_{1:n}},Z_{1:n} \sim Q_{Z_{1:n}}}[e^{\lambda f(\tilde{\theta},\tilde{W}_{1:n}, Z_{1:n})}]\\
&= \log \mathbb{E}_{\tilde{\theta},\tilde{W}_{1:n} \sim P_{\theta,W_{1:n}},Z_{1:n} \sim Q_{Z_{1:n}}}[e^{\lambda\mathbb{E}[f(\tilde{\theta},\tilde{W}_{1:n}, Z_{1:n})]-\lambda f(\tilde{\theta},\tilde{W}_{1:n}, Z_{1:n})}]\\
&\leq \frac{\lambda^2\sigma^2}{2nm}. \tag{$f(\tilde{\theta},\tilde{W}_{1:n}, Z_{1:n})$ is $\frac{\sigma}{\sqrt{nm}}$-sub-Gaussian}
\end{align*}
Putting all the previous steps together, we obtain
\begin{align*}
D(P_{\theta,W_{1:n},Z_{1:n}}\|P_{\theta,W_{1:n}}\otimes Q_{Z_{1:n}}) \geq \sup_{\lambda} \left\{ \lambda \mathbb{E}_{\theta,Z_{1:n}}\left[L_{Z_{1:n}}(\theta)-L_{\mathcal{U}}(\theta)\right] - \frac{\lambda^2\sigma^2}{2nm}  \right\},
\end{align*}
which implies 
\begin{align*}
\mathbb{E}_{\theta,Z_{1:n}}\left[L_{Z_{1:n}}(\theta)-L_{\mathcal{U}}(\theta)\right] &\leq \sqrt{\frac{2\sigma^2D(P_{\theta,W_{1:n},Z_{1:n}}\|P_{\theta,W_{1:n}}\otimes Q_{Z_{1:n}})}{nm}}\\
&\leq \sqrt{\frac{2\sigma^2(I(\theta,W_{1:n};Z_{1:n})+D(P_{Z_{1:n}}\|Q_{Z_{1:n}}))}{nm}}.
\end{align*}
\end{proof}

\subsection{Proof of Theorem \ref{thm_ml2}}

\thmmlt*

\begin{proof}
First, we let
\begin{align*}
f_i &:= f(\theta,\tau_i,Z_i) := \mathbbm{1}{\{\tau_{i}^{-}=\hat{\tau}\}} \mathbb{E}_{W_i^{-}|\theta,Z_{i}^{-}}L(W_i^{-},Z_{i}^{-})  - \mathbbm{1}{\{\tau_{i}^{+}=\hat{\tau}\}} \mathbb{E}_{W_i^{+}|\theta,Z_{i}^{+}}L(W_i^{+},Z_{i}^{+})
\end{align*}
where $L(W_i^{U_i},Z_{i}^{U_i})=\frac{1}{m}\sum_{j=1}^m \ell(W_i^{U_i},S^{U_i}_{i,j})$.
From the definition of $\text{gen}_{\text{sub}}$, we have
\begin{align*}
\text{gen}_{\text{sub}} &:=  \mathbb{E}_{\hat{\tau}} \mathbb{E}_{\theta,Z_{1:n}}\left[ \frac{1}{n_{\hat{\tau}}} \sum_{i=1}^n \mathbb{E}_{U_i}  \left[ U_i f(\theta,\tau_i,Z_i)  \right] \right] =  \mathbb{E}_{\hat{\tau}} \mathbb{E}_{Z_{1:n}}\left[ \frac{1}{n_{\hat{\tau}}} \sum_{i=1}^n \mathbb{E}_{\theta|Z_{1:n}} \mathbb{E}_{U_i}  \left[ U_i f_i  \right] \right].
\end{align*}
Denoting $(\tilde{f}_i,\tilde{U}_i)$ as an independent copy of $(f_i,U_i)$, the conditional mutual information can be written as
\begin{align*}
I(f_i;U_i|Z_{1:n}) &= D(P_{f_i,U_i|Z_{1:n}}\| P_{f_i|Z_{1:n}} \otimes P_{U_i|Z_{1:n}}) \\
&\geq \sup_{\lambda} \left\{  \mathbb{E}_{f_i,U_i|Z_{1:n}}[\lambda U_if_i] - \log  \mathbb{E}_{\tilde{f}_i,\tilde{U}_i|Z_{1:n}}[e^{\lambda \tilde{U}_i\tilde{f}_i}] \right\}. \tag{by Lemma \ref{lem_1}}
\end{align*}
Next, since $\ell$ is $\sigma$-sub-Gaussian, $L(W_i^{U_i},Z_i^{U_i})$ is $\frac{\sigma}{\sqrt{m}}$-sub-Gaussian. Thus, we have
\begin{align*}
 \log  \mathbb{E}_{\tilde{f}_i,\tilde{U}_i|Z_{1:n}}[e^{\lambda \tilde{U}_i\tilde{f}_i}] \leq \frac{\lambda^2\sigma^2}{2m}
\end{align*}
and
\begin{align*}
I(f_i;U_i|Z_{1:n}) &\geq \sup_{\lambda} \left\{ \lambda \mathbb{E}_{f_i,U_i|Z_{1:n}}[ U_if_i] - \frac{\lambda^2\sigma^2}{2m} \right\}= \sup_{\lambda} \left\{ \lambda \mathbb{E}_{\theta,U_i|Z_{1:n}}[ U_if_i] - \frac{\lambda^2\sigma^2}{2m} \right\}.
\end{align*}
The above imples
\begin{align*}
\mathbb{E}_{\theta,U_i|Z_{1:n}}[ U_if_i] \leq \sqrt{\frac{2\sigma^2 I(f_i;U_i|Z_{1:n})}{m}}.
\end{align*}
The analysis thus far concerns a fixed task $\tau_i$ with the sampled dataset $Z_i$. Taking the expectation with respect to $P_{Z_{1:n}}$, we obtain
\begin{align*}
\mathbb{E}_{Z_{1:n}}\left[ \frac{1}{n_{\hat{\tau}}} \sum_{i=1}^n \mathbb{E}_{\theta,U_i|Z_{1:n}} \left[ U_i f_i  \right] \right] \leq \mathbb{E}_{Z_{1:n}}\left[ \frac{1}{n_{\hat{\tau}}} \sum_{i=1}^n \sqrt{\frac{2\sigma^2 I(f_i;U_i|Z_{1:n})}{m}} \right].
\end{align*}
At this stage, we have established a task-wise generalization bound for task $\hat{\tau}$. To extend this result to a subtask generalization bound, we take the expectation with respect to the testing task distribution $\hat{\mathcal{T}}$. Therefore, we conclude that
\begin{align*}
\text{gen}_{\text{sub}} &:= \mathbb{E}_{\hat{\tau}} \mathbb{E}_{\theta,Z_{1:n}}\left[ \frac{1}{n_{\hat{\tau}}} \sum_{i=1}^n \mathbb{E}_{U_i}  \left[ U_i f_i  \right] \right] = \mathbb{E}_{\hat{\tau}} \mathbb{E}_{Z_{1:n}}\left[ \frac{1}{n_{\hat{\tau}}} \sum_{i=1}^n \mathbb{E}_{\theta,U_i|Z_{1:n}}  \left[ U_i f_i  \right] \right] \\
&\leq \mathbb{E}_{\hat{\tau}} \mathbb{E}_{Z_{1:n}}\left[ \frac{1}{n_{\hat{\tau}}} \sum_{i=1}^n \sqrt{\frac{2\sigma^2 I(f_i;U_i|Z_{1:n})}{m}} \right].
\end{align*}
\end{proof}

\section{Proofs for Section \ref{sec_mrl}}

\subsection{Proof of Theorem \ref{thm_mrrl1}} 

\thmmrlo*

\begin{proof}
The proof is similar to that of Theorem \ref{thm_ml1}. The key difference lies in how training data is obtained. In supervised learning, the data for each task is collected prior to training, whereas in RL it depends on the task-specific parameter $\phi_i$ and is therefore generated during training. Thus, the in-task objective in meta-RL is the expected discounted return, rather than the empirical loss. Denote $P_{\mathcal{M}_{1:n}}$ as the distribution over the $n$ MDPs $\mathcal{M}_{1:n}$ that results from the training environment $\mathcal{T}$, and $Q_{\mathcal{M}_{1:n}}$  as the distribution over the MDPs that results from the testing environment $\mathcal{U}$.
By Lemma \ref{lem_3} with $X=\mathcal{M}_{1:n}$, $Y=(\theta,\phi_{1:n})$, $P_{X}=P_{\mathcal{M}_{1:n}}$, and $Q_X=Q_{\mathcal{M}_{1:n}}$, it follows that
\begin{align*}
I(\theta, \phi_{1:n};\mathcal{M}_{1:n}) + D(P_{\mathcal{M}_{1:n}}\|Q_{\mathcal{M}_{1:n}}) = D(P_{\theta,\phi_{1:n},\mathcal{M}_{1:n}}\|P_{\theta,\phi_{1:n}}\otimes Q_{\mathcal{M}_{1:n}}).
\end{align*}
Let 
\begin{align*}
f(\theta, \phi_{1:n}, \mathcal{M}_{1:n}) := \frac{1}{n} \sum_{i=1}^n \mathbb{E}_{\omega \sim P_{\omega|\mathcal{M}_i,\phi_i}}\left[\sum_{j=0}^H \gamma^{j}r_j\right]
\end{align*}
Denoting $(\tilde{\theta},\tilde{\phi}_{1:n})$ as an independent copy of $(\theta,\phi_{1:n})$, we have
\begin{align*}
&D(P_{\theta,\phi_{1:n},\mathcal{M}_{1:n}}\|P_{\theta,\phi_{1:n}}\otimes Q_{\mathcal{M}_{1:n}}) \\
&\geq \sup_{\lambda} \left\{ \mathbb{E}_{\theta,\phi_{1:n},\mathcal{M}_{1:n} \sim P_{\theta,\phi_{1:n},\mathcal{M}_{1:n}}}[{\lambda f(\theta, \phi_{1:n}, \mathcal{M}_{1:n})}] - \log \mathbb{E}_{\tilde{\theta},\tilde{\phi}_{1:n} \sim P_{\theta,\phi_{1:n}},\mathcal{M}_{1:n} \sim Q_{\mathcal{M}_{1:n}}}[e^{\lambda f(\tilde{\theta},\tilde{\phi}_{1:n}, \mathcal{M}_{1:n})}] \right\} \tag{by Lemma \ref{lem_1}}\\
&= \sup_{\lambda} \left\{\lambda \big( \underbrace{\mathbb{E}_{\theta,\phi_{1:n},\mathcal{M}_{1:n} \sim P_{\theta,\phi_{1:n},\mathcal{M}_{1:n}}}[{f(\theta, \phi_{1:n}, \mathcal{M}_{1:n})}]}_{\text{Term (1)}} - \underbrace{\mathbb{E}_{\tilde{\theta},\tilde{\phi}_{1:n}\sim P_{\theta,\phi_{1:n}},\mathcal{M}_{1:n} \sim Q_{\mathcal{M}_{1:n}}}[{f(\tilde{\theta},\tilde{\phi}_{1:n}, \mathcal{M}_{1:n})}]}_{\text{Term (2)}} \big) \right. \\
&\left. \indent + \underbrace{\lambda \mathbb{E}_{\tilde{\theta},\tilde{\phi}_{1:n}\sim P_{\theta,\phi_{1:n}},\mathcal{M}_{1:n} \sim Q_{\mathcal{M}_{1:n}}}[{f(\tilde{\theta},\tilde{\phi}_{1:n}, \mathcal{M}_{1:n})}] - \log \mathbb{E}_{\tilde{\theta},\tilde{\phi}_{1:n} \sim P_{\theta,\phi_{1:n}},\mathcal{M}_{1:n} \sim Q_{\mathcal{M}_{1:n}}}[e^{\lambda f(\tilde{\theta},\tilde{\phi}_{1:n}, \mathcal{M}_{1:n})}]}_{\text{Term (3)}} \right\}.
\end{align*}
We first analyze Term (1),
\begin{align*}
&\mathbb{E}_{\theta,\phi_{1:n},\mathcal{M}_{1:n} \sim P_{\theta,\phi_{1:n},\mathcal{M}_{1:n}}}[{f(\theta, \phi_{1:n}, \mathcal{M}_{1:n})}] \\
&= \mathbb{E}_{\theta,\phi_{1:n},\mathcal{M}_{1:n} \sim P_{\theta,\mathcal{M}_{1:n}}P_{\phi_{1:n}|\theta,\mathcal{M}_{1:n}}}[{f(\theta, \phi_{1:n}, \mathcal{M}_{1:n})}] \\
&= \mathbb{E}_{\theta,\mathcal{M}_{1:n} \sim P_{\theta,\mathcal{M}_{1:n}}}\mathbb{E}_{\phi_{1:n} \sim P_{\phi_{1:n}|\theta,\mathcal{M}_{1:n}}}[{f(\theta, \phi_{1:n}, \mathcal{M}_{1:n})}]\\
&= \mathbb{E}_{\theta,\mathcal{M}_{1:n} \sim P_{\theta,\mathcal{M}_{1:n}}}\mathbb{E}_{\phi_{1:n} \sim P_{\phi_{1:n}|\theta,\mathcal{M}_{1:n}}}\left[\frac{1}{n} \sum_{i=1}^n \mathbb{E}_{\omega \sim P_{\omega|\mathcal{M}_i,\phi_i}}\left[\sum_{j=0}^H \gamma^{j}r_j\right]\right]\\
&= \mathbb{E}_{\theta,\mathcal{M}_{1:n} \sim P_{\theta,\mathcal{M}_{1:n}}}\left[\frac{1}{n} \sum_{i=1}^n \mathbb{E}_{\phi_{i} \sim P_{\phi_{i}|\theta,\mathcal{M}_{i}}} \mathbb{E}_{\omega \sim P_{\omega|\mathcal{M}_i,\phi_i}}\left[\sum_{j=0}^H \gamma^{j}r_j\right]\right]\\
&= \mathbb{E}_{\theta,\mathcal{M}_{1:n}}\left[J_{\mathcal{M}_{1:n}}(\theta)\right].
\end{align*}
For Term (2), we have
\begin{align*}
&\mathbb{E}_{\tilde{\theta},\tilde{\phi}_{1:n}\sim P_{\theta,\phi_{1:n}},\mathcal{M}_{1:n} \sim Q_{\mathcal{M}_{1:n}}}[{f(\tilde{\theta},\tilde{\phi}_{1:n}, \mathcal{M}_{1:n})}] \\
&= \mathbb{E}_{\tilde{\theta},\tilde{\phi}_{1:n}\sim P_{\theta,\phi_{1:n}},\mathcal{M}_{1:n} \sim Q_{\mathcal{M}_{1:n}}}\left[\frac{1}{n} \sum_{i=1}^n \mathbb{E}_{\omega \sim P_{\omega|\mathcal{M}_i,\tilde{\phi}_i}}\left[\sum_{j=0}^H \gamma^{j}r_j\right]\right]\\
&= \mathbb{E}_{\tilde{\theta},\tilde{\phi}_{1:n}\sim P_{\theta,\phi_{1:n}}}\left[\frac{1}{n} \sum_{i=1}^n \mathbb{E}_{\mathcal{M}_{i} \sim Q_{\mathcal{M}i}}\mathbb{E}_{\omega \sim P_{\omega|\mathcal{M}_i,\tilde{\phi}_i}}\left[\sum_{j=0}^H \gamma^{j}r_j\right]\right]\\
&= \mathbb{E}_{\tilde{\theta},\tilde{\phi}_{1:n}\sim P_{\theta,\phi_{1:n}}}\left[\frac{1}{n} \sum_{i=1}^n \mathbb{E}_{\mathcal{M}_i \sim \mathcal{U}}\mathbb{E}_{\omega \sim P_{\omega|\mathcal{M}_i,\tilde{\phi}_i}}\left[\sum_{j=0}^H \gamma^{j}r_j\right]\right]\\
&= \mathbb{E}_{\tilde{\theta},\tilde{\phi}_{1:n} \sim P_{\theta} P_{\phi_{1:n}|\theta}} \left[\frac{1}{n} \sum_{i=1}^n \mathbb{E}_{\mathcal{M}_i \sim \mathcal{U}}\mathbb{E}_{\omega \sim P_{\omega|\mathcal{M}_i,\tilde{\phi}_i}}\left[\sum_{j=0}^H \gamma^{j}r_j\right]\right]\\
&= \mathbb{E}_{\tilde{\theta} \sim P_{\theta}} \left[\frac{1}{n} \sum_{i=1}^n \mathbb{E}_{\tilde{\phi}_{i}\sim P_{\phi_{i}|\theta}} \mathbb{E}_{\mathcal{M}_i \sim \mathcal{U}}\mathbb{E}_{\omega \sim P_{\omega|\mathcal{M}_i,\tilde{\phi}_i}}\left[\sum_{j=0}^H \gamma^{j}r_j\right]\right]\\
&= \mathbb{E}_{\tilde{\theta} \sim P_{\theta}} \mathbb{E}_{\tilde{\phi}\sim P_{\phi|\theta}} \mathbb{E}_{\mathcal{M} \sim \mathcal{U}} \left[\frac{1}{n} \sum_{i=1}^n \mathbb{E}_{\omega \sim P_{\omega|\mathcal{M},\tilde{\phi}}}\left[\sum_{j=0}^H \gamma^{j}r_j\right]\right]\\
&= \mathbb{E}_{\tilde{\theta} \sim P_{\theta}} \mathbb{E}_{\tilde{\phi}\sim P_{\phi|\theta}} \mathbb{E}_{\mathcal{M} \sim \mathcal{U}} J_{\mathcal{M}}(\pi_{\tilde{\phi}}).
\end{align*}
Recall that $\tilde{\theta}$ and $\tilde{\phi}_{1:n}$ are independent of the training tasks $\mathcal{M}_{1:n}$. In contrast, $\theta$ is learned from $\mathcal{M}_{1:n}$. The task-specific parameters $\phi_{1:n}$ in the population meta-risk with respect to $\mathcal{U}$ depend on both $\theta$ and the testing environment. Thus,
\begin{align*}
\mathbb{E}_{\tilde{\theta} \sim P_{\theta}} \mathbb{E}_{\tilde{\phi}\sim P_{\phi|\theta}} \mathbb{E}_{\mathcal{M} \sim \mathcal{U}} J_{\mathcal{M}}(\pi_{\tilde{\phi}}) &= \mathbb{E}_{\mathcal{M}_{1:n} \sim P_{\mathcal{M}_{1:n}}} \mathbb{E}_{{\theta} \sim P_{\theta|\mathcal{M}_{1:n}}} \mathbb{E}_{\mathcal{M} \sim \mathcal{U}}  \mathbb{E}_{\tilde{\phi}\sim P_{\phi|\mathcal{M},\theta}} J_{\mathcal{M}}(\pi_{\tilde{\phi}})\\
&= \mathbb{E}_{\theta,\mathcal{M}_{1:n}}[J_{\mathcal{U}}(\theta)].
\end{align*}
Next, since $\sum_{j=0}^H \gamma^{j}r_j$ is bounded by $1/(1-\gamma)$, Term (3) can be bounded as follows,
\begin{align*}
&\lambda \mathbb{E}_{\tilde{\theta},\tilde{\phi}_{1:n}\sim P_{\theta,\phi_{1:n}},\mathcal{M}_{1:n} \sim Q_{\mathcal{M}_{1:n}}}[{f(\tilde{\theta},\tilde{\phi}_{1:n}, \mathcal{M}_{1:n})}] - \log \mathbb{E}_{\tilde{\theta},\tilde{\phi}_{1:n} \sim P_{\theta,\phi_{1:n}},\mathcal{M}_{1:n} \sim Q_{\mathcal{M}_{1:n}}}[e^{\lambda f(\tilde{\theta},\tilde{\phi}_{1:n}, \mathcal{M}_{1:n})}]\\
&= \log \mathbb{E}_{\tilde{\theta},\tilde{\phi}_{1:n} \sim P_{\theta,\phi_{1:n}},\mathcal{M}_{1:n} \sim Q_{\mathcal{M}_{1:n}}}[e^{\lambda\mathbb{E}[f(\tilde{\theta},\tilde{\phi}_{1:n}, \mathcal{M}_{1:n})]-\lambda f(\tilde{\theta},\tilde{\phi}_{1:n}, \mathcal{M}_{1:n})}]\\
&\leq \frac{\lambda^2}{2n(1-\gamma)^2}.
\end{align*}
Putting all the previous steps together, we obtain
\begin{align*}
D(P_{\theta,\phi_{1:n},\mathcal{M}_{1:n}}\|P_{\theta,\phi_{1:n}}\otimes Q_{\mathcal{M}_{1:n}}) \geq \sup_{\lambda} \left\{ \lambda \mathbb{E}_{\theta,\mathcal{M}_{1:n}}[J_{\mathcal{M}_{1:n}}(\theta)-J_{\mathcal{U}}(\theta)] - \frac{\lambda^2}{2n(1-\gamma)^2}  \right\},
\end{align*}
which implies 
\begin{align*}
\mathbb{E}_{\theta,\mathcal{M}_{1:n}}[J_{\mathcal{M}_{1:n}}(\theta)-J_{\mathcal{U}}(\theta)]  &\leq \sqrt{\frac{2D(P_{\theta,\phi_{1:n},\mathcal{M}_{1:n}}\|P_{\theta,\phi_{1:n}}\otimes Q_{\mathcal{M}_{1:n}})}{n(1-\gamma)^2}}\\
&\leq \sqrt{\frac{2I(\theta,\phi_{1:n};\mathcal{M}_{1:n})+2D(P_{\mathcal{M}_{1:n}}\|Q_{\mathcal{M}_{1:n}})}{n(1-\gamma)^2}}.
\end{align*}
\end{proof}

\subsection{Proof of Lemma \ref{lem_decompose}}

\lemdcom*

\begin{proof}
Recall that an MDP is defined by 
$\mathcal{M}(\mathcal{S}, \mathcal{A}, \mathcal{P}, \rho, r, \gamma,H)$. Let $\mathcal{P}, \rho, r$ and $\mathcal{P}^{\prime}, \rho^{\prime}, r^{\prime}$ denote the transition kernels, initial state distributions, and reward functions correspond to environments $P_{\mathcal{M}}$ and $Q_{\mathcal{M}}$ respectively. Denote $\mathcal{P}_{\times}(r^{\prime},s^{\prime}|s,a):=\mathcal{P}(s^{\prime}|s,a)\text{Prob}(r(s,a)=r^{\prime})$ as the joint transition kernel. Now, we have
\begin{align*}
D(P_{\mathcal{M}}\| Q_{\mathcal{M}}) &\leq \max_{\phi\in \Phi} \mathbb{E}_{\omega\sim P_{\mathcal{M}}}\left[ \log \frac{\rho(s_0) \prod_{h=0}^H \pi_{\phi}(a_h|s_h) \mathcal{P}_{\times}(s_{h+1},r_h|s_h,a_h)}{\rho^{\prime}(s_0) \prod_{h=0}^H \pi_{\phi}(a_h|s_h) \mathcal{P}^{\prime}_{\times}(s_{h+1},r_h|s_h,a_h)} \right]\\
&= D(\rho\|\rho^{\prime}) + \sum_{h=0}^H \max_{\phi\in \Phi} \mathbb{E}_{\omega\sim P_{\mathcal{M}}}\left[ \log \frac{\pi_{\phi}(a_h|s_h) \mathcal{P}_{\times}(s_{h+1},r_h|s_h,a_h)}{\pi_{\phi}(a_h|s_h) \mathcal{P}^{\prime}_{\times}(s_{h+1},r_h|s_h,a_h)} \right].
\end{align*}
Denote $d^{\pi_{\phi}}_h(\cdot,\cdot)$ as the state-action marginal distribution of policy $\pi_{\phi}$ at timestamp $h$. We obtain
\begin{align*}
&\sum_{h=0}^H \max_{\phi\in \Phi} \mathbb{E}_{\omega\sim P_{\mathcal{M}}}\left[ \log \frac{\pi_{\phi}(a_h|s_h) \mathcal{P}_{\times}(s_{h+1},r_h|s_h,a_h)}{\pi_{\phi}(a_h|s_h) \mathcal{P}^{\prime}_{\times}(s_{h+1},r_h|s_h,a_h)} \right] \\
= &\sum_{h=0}^H \max_{\phi\in \Phi} \mathbb{E}_{(s_h,a_h)\sim d^{\pi_{\phi}}_h} \mathbb{E}_{(s_{h+1},r_h)\sim \mathcal{P}_{\times}}\left[ \log \frac{ \mathcal{P}_{\times}(s_{h+1},r_h|s_h,a_h)}{\mathcal{P}^{\prime}_{\times}(s_{h+1},r_h|s_h,a_h)} \right]\\
= &\sum_{h=0}^H \max_{\phi\in \Phi} \mathbb{E}_{(s_h,a_h)\sim d^{\pi_{\phi}}_h} \left[ \mathbb{E}_{s_{h+1}\sim \mathcal{P}}\left[ \log \frac{ \mathcal{P}(s_{h+1}|s_h,a_h)}{\mathcal{P}^{\prime}(s_{h+1}|s_h,a_h)} \right] + \mathbb{E}_{r_{h}\sim {r}} \left[ \log \frac{ \text{Prob}(r(s_h,a_h)=r_h) }{\text{Prob}(r^{\prime}(s_h,a_h)=r_h)} \right]\right]\\
\leq &\sum_{h=0}^H \max_{(s,a)\in \mathcal{S}\times \mathcal{A}} D(\mathcal{P}(\cdot|s,a)\| \mathcal{P}^{\prime}(\cdot|s,a)) + \sum_{h=0}^H\max_{(s,a)\in \mathcal{S}\times \mathcal{A}} D(r(s,a)\|r^{\prime}(s,a))  \\
= &(H+1) \max_{(s,a)\in \mathcal{S}\times \mathcal{A}} D(\mathcal{P}(\cdot|s,a)\| \mathcal{P}^{\prime}(\cdot|s,a)) + (H+1)\max_{(s,a)\in \mathcal{S}\times \mathcal{A}} D(r(s,a)\|r^{\prime}(s,a)).
\end{align*}
Putting everthing together, we conclude that
\begin{align*}
D(P_{\mathcal{M}}\| Q_{\mathcal{M}}) \leq D(\rho\| \rho^{\prime}) + (H+1) \max_{(s,a)\in \mathcal{S}\times \mathcal{A}}( D(\mathcal{P}(\cdot|s,a)\| \mathcal{P}^{\prime}(\cdot|s,a)) + D(r(s,a)\|r^{\prime}(s,a))).
\end{align*}
\end{proof}

\subsection{Proof of Theorem \ref{thm_mrrl7}}

\thmmrlt*

\begin{proof}
First, we let
\begin{align*}
f_i &:= f(\theta,\mathcal{M}_i) := \mathbbm{1}{\{\mathcal{M}_{i}^{-}=\hat{\mathcal{M}}\}} \mathbb{E}_{\phi_i^{-}|\theta,\mathcal{M}_{i}^{-}}J(\pi_{\phi_i^{-}},\mathcal{M}_{i}^{-})  - \mathbbm{1}{\{\mathcal{M}_{i}^{+}=\hat{\mathcal{M}}\}} \mathbb{E}_{\phi_i^{+}|\theta,\mathcal{M}_{i}^{+}}J(\pi_{\phi_i^{+}},\mathcal{M}_{i}^{+})
\end{align*}
where $J(\pi_{\phi_i^{U_i}},\mathcal{M}_{i}^{U_i})= \mathbb{E}_{\omega \sim P_{\omega|\mathcal{M}_{i}^{U_i},\phi_i^{U_i}}}\left[ \sum_{j=0}^H \gamma^j r_j \right]$.
From the definition of $\text{gen}_{\text{sub}}$, we have
\begin{align*}
\text{gen}_{\text{sub}} &:=  \mathbb{E}_{\hat{\mathcal{M}}} \mathbb{E}_{\theta,\mathcal{M}_{1:n}}\left[ \frac{1}{n_{\hat{\tau}}} \sum_{i=1}^n \mathbb{E}_{U_i}  \left[ U_i f(\theta,\mathcal{M}_i)  \right] \right] =  \mathbb{E}_{\hat{\mathcal{M}}} \mathbb{E}_{\mathcal{M}_{1:n}}\left[ \frac{1}{n_{\hat{\tau}}} \sum_{i=1}^n \mathbb{E}_{\theta|\mathcal{M}_{1:n}} \mathbb{E}_{U_i}  \left[ U_i f_i  \right] \right].
\end{align*}
Denoting $(\tilde{f}_i,\tilde{U}_i)$ as an independent copy of $(f_i,U_i)$, the conditional mutual information can be written as
\begin{align*}
I(f_i;U_i|\mathcal{M}_{1:n}) &= D(P_{f_i,U_i|\mathcal{M}_{1:n}}\| P_{f_i|\mathcal{M}_{1:n}} \otimes P_{U_i|\mathcal{M}_{1:n}}) \\
&\geq \sup_{\lambda} \left\{  \mathbb{E}_{f_i,U_i|\mathcal{M}_{1:n}}[\lambda U_if_i] - \log  \mathbb{E}_{\tilde{f}_i,\tilde{U}_i|\mathcal{M}_{1:n}}[e^{\lambda \tilde{U}_i\tilde{f}_i}] \right\}. \tag{by Lemma \ref{lem_1}}
\end{align*}
Next, since $J(\pi_{\phi_i^{U_i}},\mathcal{M}_{i}^{U_i})$ is bounded by $1/(1-\gamma)$, we have
\begin{align*}
 \log  \mathbb{E}_{\tilde{f}_i,\tilde{U}_i|\mathcal{M}_{1:n}}[e^{\lambda \tilde{U}_i\tilde{f}_i}] \leq \frac{\lambda^2}{2(1-\gamma)^2}
\end{align*}
and
\begin{align*}
I(f_i;U_i|\mathcal{M}_{1:n}) &\geq \sup_{\lambda} \left\{ \lambda \mathbb{E}_{f_i,U_i|\mathcal{M}_{1:n}}[ U_if_i] - \frac{\lambda^2}{2(1-\gamma)^2} \right\}= \sup_{\lambda} \left\{ \lambda \mathbb{E}_{\theta,U_i|\mathcal{M}_{1:n}}[ U_if_i] - \frac{\lambda^2}{2(1-\gamma)^2} \right\}.
\end{align*}
The above imples
\begin{align*}
\mathbb{E}_{\theta,U_i|\mathcal{M}_{1:n}}[ U_if_i] \leq \sqrt{\frac{2 I(f_i;U_i|\mathcal{M}_{1:n})}{(1-\gamma)^2}}.
\end{align*}
The analysis thus far concerns a fixed task $\mathcal{M}_i$. Taking the expectation with respect to $P_{\mathcal{M}_{1:n}}$, we obtain
\begin{align*}
\mathbb{E}_{\mathcal{M}_{1:n}}\left[ \frac{1}{n_{\hat{\mathcal{M}}}} \sum_{i=1}^n \mathbb{E}_{\theta,U_i|\mathcal{M}_{1:n}} \left[ U_i f_i  \right] \right] \leq \mathbb{E}_{\mathcal{M}_{1:n}}\left[ \frac{1}{n_{\hat{\mathcal{M}}}} \sum_{i=1}^n \sqrt{\frac{2 I(f_i;U_i|\mathcal{M}_{1:n})}{(1-\gamma)^2}} \right].
\end{align*}
At this stage, we have established a task-wise generalization bound for task $\hat{\mathcal{M}}$. To extend this result to a subtask generalization bound, we take the expectation with respect to the testing task distribution $\hat{\mathcal{T}}$. Therefore, we conclude that
\begin{align*}
\text{gen}_{\text{sub}} &:= \mathbb{E}_{\hat{\mathcal{M}}} \mathbb{E}_{\theta,\mathcal{M}_{1:n}}\left[ \frac{1}{n_{\hat{\mathcal{M}}}} \sum_{i=1}^n \mathbb{E}_{U_i}  \left[ U_i f_i  \right] \right] = \mathbb{E}_{\hat{\mathcal{M}}} \mathbb{E}_{\mathcal{M}_{1:n}}\left[ \frac{1}{n_{\hat{\mathcal{M}}}} \sum_{i=1}^n \mathbb{E}_{\theta,U_i|\mathcal{M}_{1:n}}  \left[ U_i f_i  \right] \right] \\
&\leq \mathbb{E}_{\hat{\mathcal{M}}} \mathbb{E}_{\mathcal{M}_{1:n}}\left[ \frac{1}{n_{\hat{\mathcal{M}}}} \sum_{i=1}^n \sqrt{\frac{2 I(f_i;U_i|\mathcal{M}_{1:n})}{(1-\gamma)^2}} \right].
\end{align*}
\end{proof}

\subsection{Proof of Theorem \ref{thm_mrl5}}

\thmmrlff*

\begin{proof}
Let $\{ \theta^m\}_{m=0}^M$ denote the sequence of meta-parameters, and $\{ \phi^k_{1:n}\}_{k=0}^{MT}$ denote the sequence of task-specific parameters. Their final outputs are denoted by $(\theta,\phi_{1:n})$. Denote $\nabla J^m_{\text{meta}}=\alpha_m \frac{1}{b} \sum_{\mathcal{M}_i\in \mathcal{B}_m} \nabla J_{\mathcal{M}_i}(\theta^m) +\xi_m$ and $\nabla J^{{k}}_{\text{inner}}=\begin{bmatrix}
    \beta_k \nabla J(\pi_{\phi_{1}^k},\mathcal{M}_1) + \zeta_k \\
    \vdots \\
    \beta_k \nabla J(\pi_{\phi_{n}^k},\mathcal{M}_n) + \zeta_k
\end{bmatrix}$.
For any $Tm+t \in [TM]$, we have the Markov chain
\begin{align*}
\mathcal{M}_{1:n} \rightarrow (\theta^m,\phi_{1:n}^{{Tm+t}},\nabla J^m_{\text{meta}},\nabla J^{{Tm+t}}_{\text{inner}}) \rightarrow (\theta,\phi_{1:n}).
\end{align*}
By the data processing inequality (Lemma \ref{lem_2}), we have
\begin{align}
&I(\theta,\phi_{1:n};\mathcal{M}_{1:n})\nonumber \\
&= I(\theta^{M},\phi^{TM}_{1:n};\mathcal{M}_{1:n})\nonumber \\
&= I(\theta^{M-1}+\nabla J^{M-1}_{\text{meta}},\phi^{TM}_{1:n};\mathcal{M}_{1:n})\nonumber \\
&\leq I(\theta^{M-1},\nabla J^{M-1}_{\text{meta}},\phi^{TM}_{1:n};\mathcal{M}_{1:n})\nonumber \\
&= I(\theta^{M-1},\phi^{TM}_{1:n};\mathcal{M}_{1:n}) + I(\nabla J^{M-1}_{\text{meta}};\mathcal{M}_{1:n}|\theta^{M-1},\phi^{TM}_{1:n})\nonumber \\
&= I(\theta^{M-1},\phi^{TM-1}_{1:n}+\nabla J^{TM-1}_{\text{inner}};\mathcal{M}_{1:n}) + I(\nabla J^{M-1}_{\text{meta}};\mathcal{M}_{1:n}|\theta^{M-1},\phi^{TM}_{1:n}) \nonumber \\
&\leq I(\theta^{M-1},\phi^{TM-1}_{1:n},\nabla J^{TM-1}_{\text{inner}};\mathcal{M}_{1:n}) + I(\nabla J^{M-1}_{\text{meta}};\mathcal{M}_{1:n}|\theta^{M-1},\phi^{TM}_{1:n}) \nonumber \\
&= I(\theta^{M-1},\phi^{TM-1}_{1:n};\mathcal{M}_{1:n}) + I(\nabla J^{TM-1}_{\text{inner}};\mathcal{M}_{1:n}|\theta^{M-1},\phi^{TM-1}_{1:n}) + I(\nabla J^{M-1}_{\text{meta}};\mathcal{M}_{1:n}|\theta^{M-1},\phi^{TM}_{1:n}) \nonumber \\
&\leq I(\theta^{M-1},\phi^{TM-T}_{1:n};\mathcal{M}_{1:n}) + I(\nabla J^{M-1}_{\text{meta}};\mathcal{M}_{1:n}|\theta^{M-1},\phi^{TM}_{1:n}) + \sum_{t=1}^T I(\nabla J^{TM-t}_{\text{inner}};\mathcal{M}_{1:n}|\theta^{M-1},\phi^{TM-t}_{1:n}) \nonumber \\
&\leq I(\theta^{0},\phi^{0}_{1:n};\mathcal{M}_{1:n}) +  \sum_{m=1}^M I(\nabla J^{m-1}_{\text{meta}};\mathcal{M}_{1:n}|\theta^{m-1},\phi^{Tm}_{1:n}) + \sum_{m=1}^M \sum_{t=1}^T I(\nabla J^{Tm-t}_{\text{inner}};\mathcal{M}_{1:n}|\theta^{m-1},\phi^{Tm-t}_{1:n}) \nonumber \\
&= \sum_{m=1}^M I(\nabla J^{m-1}_{\text{meta}};\mathcal{M}_{1:n}|\theta^{m-1},\phi^{Tm}_{1:n}) + \sum_{m=1}^M \sum_{t=1}^T I(\nabla J^{Tm-t}_{\text{inner}};\mathcal{M}_{1:n}|\theta^{m-1},\phi^{Tm-t}_{1:n}). \label{eqn_2mi}
\end{align}
Note that $\theta^{m-1}$ and $\phi^{Tm}_{1:n}$ are random variables, and the conditional mutual information is defined as an expectation over their joint distribution. We use the notation $I^{\theta^{m-1},\phi^{Tm}_{1:n}}(\nabla J^{m-1}_{\text{meta}};\mathcal{M}_{1:n})$ to denote the conditional mutual information given fixed realizations of $\theta^{m-1}$ and $\phi^{Tm}_{1:n}$ (i.e., without taking the expectation).
We now analyze each term in \cref{eqn_2mi} individually. For the first term, we have
\begin{align*}
\text{cov}(\nabla J^{m}_{\text{meta}})&= \text{cov}\left(\alpha_m \frac{1}{b} \sum_{\mathcal{M}_i \in \mathcal{B}_m} \nabla J_{\mathcal{M}_i}(\theta^m) +\xi_m\right)\\
&= \alpha_m^2 \tilde{\Sigma}_m + \tilde{\kappa}^2_m \mathbf{1}_d
\end{align*}
where
\begin{align*}
\tilde{\Sigma}_m = \text{cov}\left(\frac{1}{b} \sum_{\mathcal{M}_i \in \mathcal{B}_m} \nabla J_{\mathcal{M}_i}(\theta^m)\right).
\end{align*}
Therefore,
\begin{align*}
&\sum_{m=1}^M I^{\theta^{m-1},\phi^{Tm}_{1:n}}(\nabla J^{m-1}_{\text{meta}};\mathcal{M}_{1:n}) \\
&= \sum_{m=0}^{M-1} I^{\theta^{m},\phi^{T(m+1)}_{1:n}}(\alpha_m \frac{1}{b} \sum_{\mathcal{M}_i \in \mathcal{B}_m} \nabla J_{\mathcal{M}_i}(\theta^m) +\xi_m;\mathcal{M}_{1:n})\\
&= \sum_{m=0}^{M-1} H(\alpha_m \frac{1}{b} \sum_{\mathcal{M}_i \in \mathcal{B}_m} \nabla J_{\mathcal{M}_i}(\theta^m) +\xi_m|\theta^{m},\phi^{T(m+1)}_{1:n}) - \sum_{m=0}^{M-1} H(\alpha_m \frac{1}{b} \sum_{\mathcal{M}_i \in \mathcal{B}_m} \nabla J_{\mathcal{M}_i}(\theta^m) +\xi_m|\theta^{m},\phi^{T(m+1)}_{1:n},\mathcal{M}_{1:n})\\
&\leq \sum_{m=0}^{M-1} H(\alpha_m \frac{1}{b} \sum_{\mathcal{M}_i \in \mathcal{B}_m} \nabla J_{\mathcal{M}_i}(\theta^m) +\xi_m|\theta^{m},\phi^{T(m+1)}_{1:n}) - \sum_{m=0}^{M-1} H(\xi_m) \tag{independent $\xi_m$}\\
&= \sum_{m=0}^{M-1} H(\alpha_m \frac{1}{b} \sum_{\mathcal{M}_i \in \mathcal{B}_m} \nabla J_{\mathcal{M}_i}(\theta^m) +\xi_m|\theta^{m},\phi^{T(m+1)}_{1:n}) - \sum_{m=0}^{M-1} (\frac{d}{2}\log (2\pi e)+ \frac{d}{2}\log(\tilde{\kappa}^2_m )) \tag{entropy of $d$-dimensional Gaussian vector}\\
&\leq \sum_{m=0}^{M-1} ( \frac{d}{2}\log(2\pi e) + \frac{1}{2}\log(\text{det}(\alpha_m^2 \tilde{\Sigma}_m + \tilde{\kappa}^2_m \mathbf{1}_d)) - \frac{d}{2}\log(2\pi e\tilde{\kappa}^2_m)) \tag{by Lemma \ref{lem_ent}}\\
&=  \sum_{m=0}^{M-1} \frac{1}{2}\log(\text{det}(\frac{\alpha_m^2}{\tilde{\kappa}^2_m}\tilde{\Sigma}_m+\mathbf{1}_d)).
\end{align*}
For the second term in \cref{eqn_2mi}, we have
\begin{align*}
\text{cov}(\nabla J^{k}_{\text{inner}})&= \text{cov}\left({\beta}_k \nabla \bm{J}^k + \bm{\zeta}_k \right)= \beta_k^2 {\Sigma}_k + {\kappa}^2_k \mathbf{1}_{nd}
\end{align*}
where
\begin{align*}
{\Sigma}_k = \text{cov}\left(\nabla \bm{J}^k\right),
\nabla \bm{J}^k=
\begin{bmatrix}
    \nabla J(\pi_{\phi_{1}^k},\mathcal{M}_1) \\
    \vdots \\
    \nabla J(\pi_{\phi_{n}^k},\mathcal{M}_n)
\end{bmatrix}, \ \ \text{and} \ \
\bm{\zeta}_k=
\begin{bmatrix}
    \zeta_k \\
    \vdots \\
    \zeta_k
\end{bmatrix}.
\end{align*}
Thus,
\begin{align*}
&\sum_{m=1}^M \sum_{t=1}^T I^{\theta^{m-1},\phi^{Tm-t}_{1:n}}(\nabla J^{Tm-t}_{\text{inner}};\mathcal{M}_{1:n})\\
&= \sum_{m=0}^{M-1} \sum_{t=1}^T I^{\theta^{m},\phi^{T(m+1)-t}_{1:n}}( {\beta}_{T(m+1)-t} \nabla \bm{J}^{T(m+1)-t} + \bm{\zeta}_{T(m+1)-t};\mathcal{M}_{1:n})\\
&= \sum_{m=0}^{M-1}  \sum_{t=1}^T H({\beta}_{T(m+1)-t} \nabla \bm{J}^{T(m+1)-t} + \bm{\zeta}_{T(m+1)-t}|\theta^{m},\phi^{T(m+1)-t}_{1:n}) \\
&\indent - \sum_{m=0}^{M-1}  \sum_{t=1}^T H({\beta}_{T(m+1)-t} \nabla \bm{J}^{T(m+1)-t} + \bm{\zeta}_{T(m+1)-t}|\theta^{m},\phi^{T(m+1)-t}_{1:n},\mathcal{M}_{1:n})\\
&= \sum_{m=0}^{M-1}  \sum_{t=1}^T H({\beta}_{T(m+1)-t} \nabla \bm{J}^{T(m+1)-t} + \bm{\zeta}_{T(m+1)-t}|\theta^{m},\phi^{T(m+1)-t}_{1:n})  - \sum_{m=0}^{M-1}  \sum_{t=1}^T H(\bm{\zeta}_{T(m+1)-t}|\theta^{m},\phi^{T(m+1)-t}_{1:n},\mathcal{M}_{1:n})\\
&= \sum_{m=0}^{M-1}  \sum_{t=1}^T H({\beta}_{T(m+1)-t} \nabla \bm{J}^{T(m+1)-t} + \bm{\zeta}_{T(m+1)-t}|\theta^{m},\phi^{T(m+1)-t}_{1:n})  - \sum_{m=0}^{M-1}  \sum_{t=1}^T  \frac{nd}{2}\log (2\pi e)+ \frac{nd}{2}\log({\kappa}^2_{T(m+1)-t} )\\
&\leq \sum_{m=0}^{M-1} ( \frac{nd}{2}\log(2\pi e) + \frac{1}{2}\log(\text{det}(\beta_{T(m+1)-t)}^2 {\Sigma}_{T(m+1)-t)} + {\kappa}^2_{T(m+1)-t)} \mathbf{1}_{nd})) - \frac{nd}{2}\log(2\pi e{\kappa}^2_{T(m+1)-t)}) \tag{by Lemma \ref{lem_ent}}\\
&=  \sum_{m=0}^{M-1} \frac{1}{2}\log(\text{det}(\frac{\beta_{T(m+1)-t)}^2}{{\kappa}^2_{T(m+1)-t)}}{\Sigma}_{T(m+1)-t)}+\mathbf{1}_{nd})).
\end{align*}
Putting them together with \cref{eqn_2mi}, we have
\begin{align}
&I(\theta,\phi_{1:n};\mathcal{M}_{1:n})= \sum_{m=0}^{M-1} \mathbb{E}_{\theta^{m},\phi^{T(m+1)}_{1:n}}\left[  I^{\theta^{m},\phi^{T(m+1)}_{1:n}}(\alpha_m \frac{1}{b} \sum_{\mathcal{M}_i \in \mathcal{B}_m} \nabla J_{\mathcal{M}_i}(\theta^m) +\xi_m;\mathcal{M}_{1:n})\right] \nonumber \\
&\indent + \sum_{m=0}^{M-1} \sum_{t=1}^T \mathbb{E}_{\theta^{m},\phi^{T(m+1)-t}_{1:n}}\left[I^{\theta^{m},\phi^{T(m+1)-t}_{1:n}}(\nabla J^{T(m+1)-t}_{\text{inner}};\mathcal{M}_{1:n})\right] \nonumber \\
&\leq \frac{1}{2}  \sum_{m=0}^{M-1} \mathbb{E}_{\theta^m}\left[ \log\left(\textnormal{det}\left(\frac{\alpha_m^2}{\tilde{\kappa}^2_m}\tilde{\Sigma}_m+\mathbf{1}_d\right)\right)\right] \nonumber  \\
&\indent +  \frac{1}{2} \sum_{m=0}^{M-1}\sum_{t=1}^T \mathbb{E}_{\theta^m,\phi_{1:n}^{T(m+1)-t}}\left[\log\left(\textnormal{det}\left(\frac{\beta_{T(m+1)-t}^2}{{\kappa}^2_{T(m+1)-t}}{\Sigma}_{T(m+1)-t}+\mathbf{1}_{nd}\right)\right)\right] \label{eqn_f2}
\end{align}
From Theorem \ref{thm_mrrl1}, we know 
\begin{align*}
\mathbb{E}_{\theta,\mathcal{M}_{1:n}}[J_{\mathcal{M}_{1:n}}(\theta)-J_{\mathcal{U}}(\theta)] \leq \sqrt{\frac{2I(\theta,\phi_{1:n};\mathcal{M}_{1:n})+2D(P_{\mathcal{M}_{1:n}}\|Q_{\mathcal{M}_{1:n}})}{n(1-\gamma)^2}}.
\end{align*}
Combining this with \cref{eqn_f2} completes the proof.
\end{proof}

\end{document}